\let\appendices\relax
\newcommand{\calB}{{\cal B}}
\newcommand{\calE}{{\cal E}}
\newcommand{\calF}{{\cal F}}
\newcommand{\calG}{{\cal G}}
\newcommand{\calH}{{\cal H}}
\newcommand{\calI}{{\cal I}}
\newcommand{\calJ}{{\cal J}}
\newcommand{\calN}{{\cal N}}
\newcommand{\calP}{{\cal P}}
\newcommand{\calR}{{\cal R}}
\newcommand{\calV}{{\cal V}}
\newcommand{\bfb}{\mathbf{b}}
\newcommand{\bfd}{\mathbf{d}}
\newcommand{\bfm}{\mathbf{m}}
\newcommand{\bfp}{\mathbf{p}}
\newcommand{\bfv}{\mathbf{v}}
\newcommand{\bfz}{\mathbf{z}}
\newcommand{\bftheta}{\boldsymbol{\theta}}
\newcommand{\bfrho}{\boldsymbol{\rho}}
\newcommand{\bfpsi}{\boldsymbol{\psi}}
\newcommand{\bfxi}{\boldsymbol{\xi}}
\newcommand{\bfR}{\mathbf{R}}
\newcommand{\bfU}{\mathbf{U}}
\newcommand{\bfV}{\mathbf{V}}
\newcommand{\bfX}{\mathbf{X}}
\newcommand{\bbR}{\mathbb{R}}
\newcommand{\bbU}{\mathbb{U}}
\newcommand{\crl}[1]{\left\{#1\right\}}
\newcommand{\ubar}[1]{\underaccent{\bar}{#1}}
\newtheorem{proposition}{Proposition}
\theoremstyle{definition}
\newtheorem{definition}{Definition}
\newtheorem{condition}{Condition}
\newtheorem*{problem}{Problem}
\theoremstyle{remark}
\title{\LARGE \bf Active Mapping via Gradient Ascent Optimization of Shannon Mutual Information over Continuous \textit{SE(3)} Trajectories%
\thanks{}%
}
\author{Arash Asgharivaskasi \and Shumon Koga \and Nikolay Atanasov
\thanks{We gratefully acknowledge support from NSF FRR CAREER 2045945 and ARL DCIST CRA W911NF-17-2-0181.}%
\thanks{The authors are with the Department of Electrical and Computer Engineering, University of California San Diego, CA 92093, USA {\tt\small \{aasghari,skoga,natanasov\}@eng.ucsd.edu}.}
}
\begin{document}

\maketitle

\begin{abstract}
The problem of active mapping aims to plan an informative sequence of sensing views given a limited budget such as distance traveled. This paper consider active occupancy grid mapping using a range sensor, such as LiDAR or depth camera. State-of-the-art methods optimize information-theoretic measures relating the occupancy grid probabilities with the range sensor measurements. The non-smooth nature of ray-tracing within a grid representation makes the objective function non-differentiable, forcing existing methods to search over a discrete space of candidate trajectories. This work proposes a differentiable approximation of the Shannon mutual information between a grid map and ray-based observations that enables gradient ascent optimization in the continuous space of \textit{SE(3)} sensor poses. Our gradient-based formulation leads to more informative sensing trajectories, while avoiding occlusions and collisions. The proposed method is demonstrated in simulated and real-world experiments in 2-D and 3-D environments.
\end{abstract}


\section{Introduction}
\label{sec:introduction}

Mapping an unknown environment using sensor-equipped mobile robots has been widely studied motivated by many real-world applications such as search and rescue operations, planetary exploration, security and surveillance. While the traditional mapping methods have been developed for a given robot trajectory, optimizing the path for increasing the accuracy of the constructed map is significant for the operation of autonomous robots in highly unstructured environment. Such an ``active mapping" problem renders challenges in computation of objective function and its reliable optimization method in terms of enhancing the performance and computational feasibility. 


\begin{figure}[t]
    \includegraphics[width=0.95\linewidth]{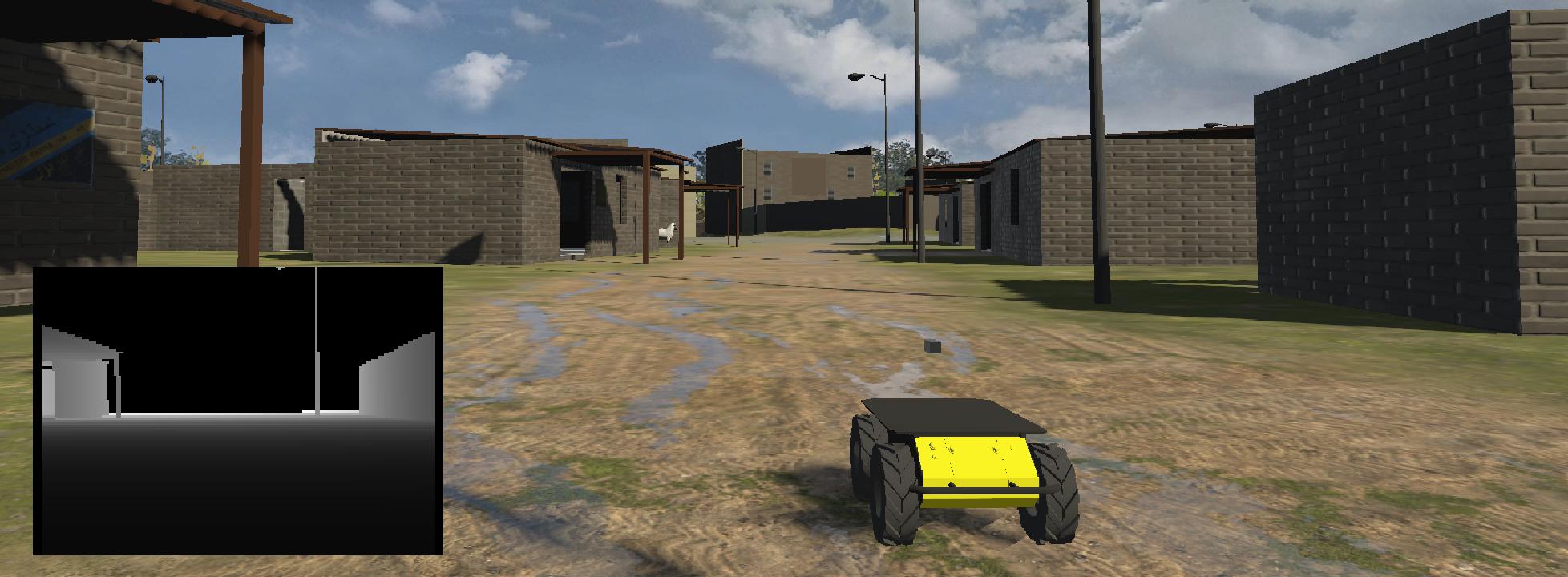}\\[1ex]
    \includegraphics[width=0.95\linewidth]{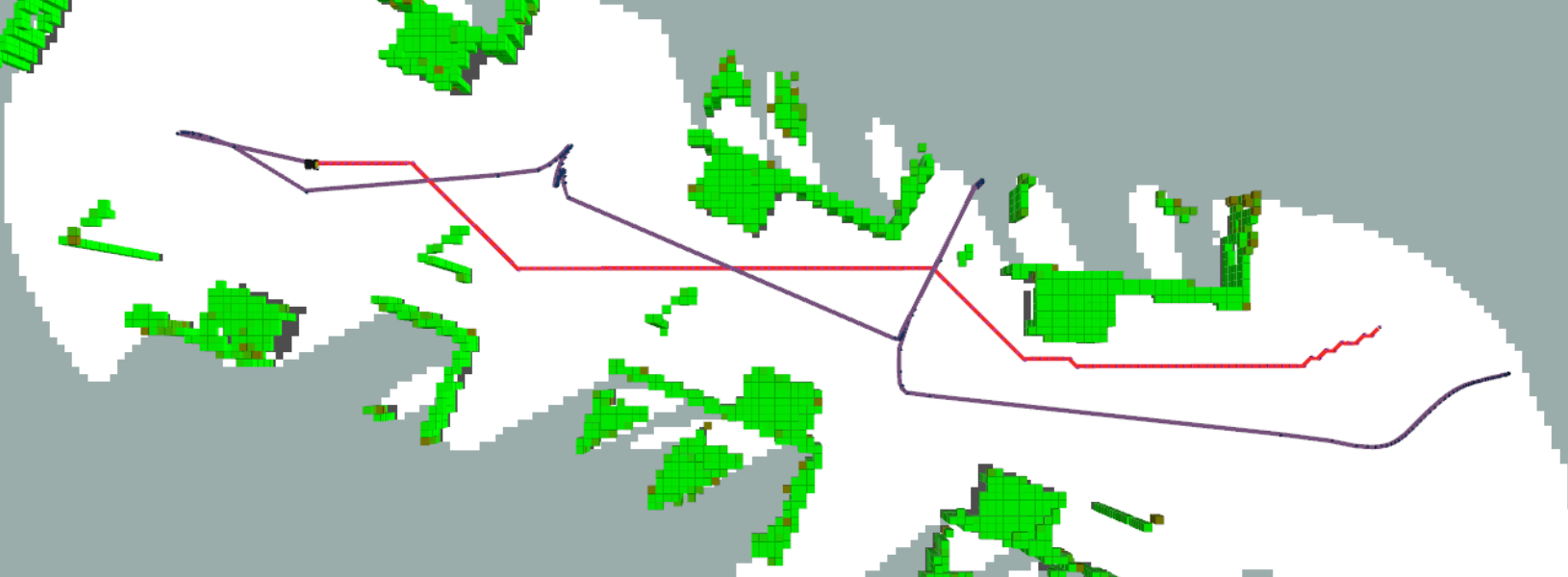}
    %
    \caption{Occlusion and collision-aware active mapping. Top: Wheeled robot exploring an unknown environment equipped with a depth sensor used for estimating a 3-D map. Bottom: Path planning given the current map. The red trajectory is $A^*$ planning towards a frontier. The purple trajectory is obtained via gradient ascent optimization of the objective function.}
    \label{fig:intro}
\end{figure}

One of the most common procedures for incremental estimation of occupancy grid maps is to utilize a narrow-ray range sensor, such as Laser range finder
\cite[Ch.~9]{ProbabilisticRoboticsBook}. However, the resulting observation model occurs to be a piece-wise constant, hence non-smooth, function of measurement poses due to the discrete nature of ray-tracing over a grid map \cite[Ch.~5]{julian2013mutual}. Hence, the objective function is piecewise-constant; making the derivatives either $0$ or undefined.
In addition to the problem of non-smoothness of grid map representations, occlusion, which is caused when a field of view intersects with objects, is a significant challenge in sensing and planning to accomplish several tasks in robotics such as 3-D reconstruction \cite{maver1993occlusions}, object classification \cite{atanasov2014nonmyopic}, and target tracking \cite{ma2021fov}. Koga \textit{et al.}~\cite{icr} has developed dense active mapping using iterative Covariance Regulation (iCR), which utilizes a differentiable field of view that enables deriving an explicit gradient of an information-theoretic objective with respect to a multi-step control sequence. Incorporating occlusion in iCR makes the field of view dependent on the map estimate, which renders significant challenge for sensing model formulation since it violates the linear-Gaussian property of the mapping by Kalman Filter. Nevertheless, the current work borrows one of the key ideas from iCR: In order to perform gradient-based optimization of the mutual information, one needs to apply a smooth proximity operator among the elements involved in its computation.

The work by Rocha \textit{et al.}~\cite{rocha2005cooperative} was among the first that attempted to derive a gradient-based strategy for active mapping of occupancy grids. The authors proposed a gradient of map entropy with respect to the robot pose at a cell center via finite difference of entropy values at adjacent cells. Julian \cite{julian2013mutual} proposed a divergent beam sensor model, where the width of a beam increases radially as it travels farther through space. While the derived mutual information formula was shown to be differentiable, it suffers from high computational complexity as it requires numerical integration of the objective function. Charrow \textit{et al.}~\cite{charrow2015information} proposed a numerical evaluation for the gradient of Cauchy-Schwarz mutual information (CSQMI)~\cite{csqmi} using finite differences of CSQMI evaluated at cell centers. Our work is most similar to \cite{DiAIM} and \cite{DiAIM-3D}, where the authors formulate the information gain as a sum of informative elements weighted by a discount factor. In particular, \cite{DiAIM} defines informative elements as frontier cells between free and unexplored area visible from a candidate pose. However, unlike the mutual information between the map and a sensor observation, using visible frontier size as a proxy for information gain does not take into account the effect of sensor noise which is inevitable in real-world sensing applications \cite{csqmi, frontier-3d-failure}.

The present work distinguishes itself from the prior methods by proposing a active mapping strategy that allows gradient ascent optimization of the Shannon mutual information (SMI) between the grid map and a sequence of beam-based observations. As opposed to the discrete-space active mapping methods \cite{frontier, csqmi, fsmi, ssmi_icra} that aim to plan an informative robot path through evaluating a finite set of candidate trajectories, the current work finds an optimal trajectory over the continuous space of the robot state. Fig.~\ref{fig:intro} illustrates the proposed gradient-based active mapping using a depth sensor. Our main \textbf{contributions} are:
\begin{enumerate}
  \item a differentiable interpolation of the SMI as well as a closed-form gradient expression,
  \item decomposition of the SMI into additive terms over a robot trajectory in \textit{SE(3)}, under sufficient assumptions for the interpolation method.
\end{enumerate}
Unlike the previous works that used map entropy or visible number of frontiers, directly utilizing the SMI allows incorporating the range sensor noise specifications to the objective function; leading to more accurate exploration.
Moreover, gradient-based methods allow augmenting the objective function with other differentiable terms (e.g. localization accuracy \cite{knuth2012collaborative}, path cost \cite{mizuno2019enhanced}). With the addition of a differentiable collision penalty to the objective function, we propose an occlusion and collision-aware robot exploration.

\section{Problem Statement}
\label{sec:problem_statement}

Consider a robot with pose $\bfX_t \in \textit{SE(3)}$ at time $t$:
\begin{equation}
    \bfX_t := \begin{bmatrix} \bfR_t & \bfp_t \\ \mathbf{0}^\top & 1 \end{bmatrix},
\label{eq:dynamic_model}
\end{equation}
where $\bfR_t \in SO(3)$ is the robot orientation and $\bfp_t \in \mathbb{R}^3$ is the robot position. The robot is navigating in an environment composed of occupied and free space. A mounted range sensor, e.g. LiDAR or depth camera, provides the robot with a stream of beam-based observations $\bfz_t \in \mathbb{R}^B$, where $B$ is the number of beams in a laser scan or pixels in a depth image, measuring the distance from the robot's position to the closest obstacle along the beam.
We model the map $\bfm$ as a grid of cells $m_i$, $i \in \calI := \crl{1, \ldots, N}$, where each cell can take one of the two states: \textit{free} or \textit{occupied}. To model measurement noise, we consider a probability density function (PDF) $p(\bfz_t \mid \bfm, \bfX_t)$ for each observation. Let $p_t(\bfm) = p(\bfm \mid \calH_t)$ be the probability mass function (PMF) of the map $\bfm$ given the history of robot poses and observations $\calH_t = \crl{(\bfX_\tau, z_\tau)}^t_{\tau=1}$. A new observation $\bfz_{t+1}$ made from robot pose $\bfX_{t+1}$ can then be integrated into the map estimation process using Bayes rule:
\begin{equation}
    p_{t+1}(\bfm) \propto p(\bfz_{t+1} \mid \bfm, \bfX_{t+1}) p_t(\bfm).
\label{eq:bayes_rule}
\end{equation}
%


The goal is to choose a collision-free pose trajectory to obtain maximally informative measurements for constructing an accurate map. As shown by Julian \textit{et al.}~\cite{julian}, maximizing the Shannon Mutual Information (SMI) between the map $\bfm$ and a sequence of potential future measurements $\bfz_{t+1:t+T}$ yields an efficient active mapping strategy. The SMI is defined as:
\begin{equation}
\begin{aligned}
    &I(\bfm; \bfz_{t+1:t+T} \mid \bfX_{t+1:t+T}, \calH_t) :=\\
    &\sum_{\bfm \in 2^N} \!\int \cdots \int p(\bfm, \bfz_{t+1:t+T} \mid \bfX_{t+1:t+T}, \calH_t) \times\\
    &\log\frac{p(\bfm, \bfz_{t+1:t+T} \mid \bfX_{t+1:t+T}, \calH_t)}{p(\bfm \mid \calH_t) p(\bfz_{t+1:t+T} \mid \bfX_{t+1:t+T}, \calH_t)} \prod_{\tau=1}^T d\bfz_{t+\tau},
\end{aligned}
\label{eq:mut_info_generic}
\end{equation}
where $\calH_t$ represents the realized history of robot poses and observations and, hence, does not appear as an integration variable. Throughout this paper, we assume that the robot pose $\bfX_t$ is known for all $t$. The SMI is a function $I(.): \textit{SE(3)}^T \rightarrow \mathbb{R}_{\geq 0}$ of the robot trajectory $\bfX_{t+1:t+T}$ parameterized by $\calH_t$.

\begin{problem}
Given a map PMF $p_t(\bfm)$ obtained from prior robot poses and observations $\calH_t$ and a finite planning horizon $T$, find a pose trajectory $\bfX_{t+1:t+T} \in \textit{SE(3)}^T$ that maximizes the SMI between the map $\bfm$ and the future observations $\bfz_{t+1:t+T}$ with PDF in \eqref{eq:bayes_rule}:
\begin{align}
    \max_{\bfX_{t+1:t+T} \in \textit{SE(3)}^T}  \Big(  I(\bfm; \bfz_{t+1:t+T} \mid &\bfX_{t+1:t+T}, \calH_t) - \label{eq:reward_func}\\
    &\qquad \quad \gamma_c C(\bfX_{t+1:t+T}) \Big),\notag
\end{align}
where $C(\bfX_{t+1:t+T})$ is a penalty term capturing the cost of collisions along $\bfX_{t+1:t+T}$ and $\gamma_c \geq 0$ is the weight of the collision penalty.
\end{problem}

In the next section, we propose a differentiable approximation of the SMI function that can be utilized for gradient-based optimization of \eqref{eq:reward_func}.

\section{Proposed Method}
\label{sec:proposed_method}

\subsection{Notation}

We overload $\hat{(.)}$ to denote the mapping from an axis-angle vector $\bftheta \in \mathbb{R}^3$ to a $3 \times 3$ skew-symmetric matrix $\hat{\bftheta} \in \mathfrak{so}(3)$ as well as from a vector $\bfxi \in \mathbb{R}^6$ to a $4 \times 4$ twist matrix:
\begin{equation}
\bfxi = \begin{bmatrix} \bfrho\\\bftheta \end{bmatrix} \in \bbR^6 \qquad \hat{\bfxi} := \begin{bmatrix} \hat{\bftheta} & \bfrho \\ \mathbf{0}^\top & 0 \end{bmatrix} \in \mathfrak{se}(3).
\end{equation}
We define an infinitesimal change of pose $\bfX \in SE(3)$ using a right perturbation $\bfX\exp(\hat{\bfxi}) \in SE(3)$. The functions $\log(.): \textit{SE(3)} \rightarrow \mathfrak{se}(3)$ and $(.)^\vee: \mathfrak{se}(3) \rightarrow \bbR^6$ denote the inverse mappings associated with $\exp(.)$ and $\hat{(.)}$, respectively. Please refer to \cite[Ch.7]{BarfootBook} for details.


\subsection{One Step Ahead Planning}
\label{subsec:one_step_ahead}

We first study the case where $T = 1$, i.e., the robot view is optimized only one step into the future. The core idea is to introduce the notion of viewpoint grid, which is a discrete set of candidate robot poses. Then, the SMI with respect to an arbitrary robot pose is approximated as a linear combination of the SMI with respect to the candidate poses with a differentiable function with respect to an arbitrary robot pose. 
%
\begin{definition}
A viewpoint grid $\calG$ is a set of robot poses $\bfX \in \textit{SE(3)}$ with position $\bfp \in \bbR^3$ and orientation $\bfR \in SO(3)$ such that $(\bfp, \bfR) \in \calP \times \calR$, where $\calP$ is the set of all map cell centers and $\calR$ is a finite set of orientations.
\end{definition}
We approximate the SMI at pose $\bfX$ as a convex combination of the SMI computed over all poses in the view grid $\calG$. Namely, $I(\bfm; \bfz \mid \bfX, \calH_t) \approx \Tilde{I}(\bfm; \bfz \mid \bfX, \calH_t) $, and 
\begin{equation}
\begin{aligned}
    &\Tilde{I}(\bfm; \bfz \mid \bfX, \calH_t) := \sum_{\bfV \in \calG} \alpha_\bfV(\bfX) I(\bfm; \bfz \mid \bfV, \calH_t), \\
    &\text{subject to}\;\; \sum_{\bfV \in \calG} \alpha_\bfV(\bfX) = 1,\; \forall \bfV \in \calG:\; 0 \leq \alpha_\bfV(\bfX),
\end{aligned}
\label{eq:SMI_approx_single}
\end{equation}
where $\alpha_\bfV(\bfX)$ is a differentiable function with respect to robot pose $\bfX$. In practice, evaluating the SMI for all $\bfV \in \calG$ during each planning phase would be computationally expensive. Therefore, one needs to design a distance metric for $\alpha_\bfV(\bfX)$ that is only non-zero in a close vicinity of the robot pose $\bfX$. The idea of pulling the robot pose $\bfX$ out of the SMI function $I(.)$ using a differentiable weighting function $\alpha_\bfV(\bfX)$ makes it possible to obtain non-zero derivatives for the approximate SMI function $\Tilde{I}(.)$ with respect to the robot pose. Fig.~\ref{fig:smi_approx} illustrates the SMI approximated from $3$ viewpoints; note how each viewpoint creates a `field' of information in its vicinity, while the approximate SMI equals the net influence of all $3$ viewpoints.

The differentiable property of the approximate SMI enables gradient-based optimization of the robot pose in order to generate maximally informative observations. This is done via applying a small perturbation $\bfpsi \in \bbR^6$ in the robot frame to the pose $\bfX$ along the direction of the gradient with a step size of $l$:
%
\begin{align}
    \bfX^{(k+1)} & = \bfX^{(k)} \exp(l \hat{\bfpsi}^{(k)}), \notag\\
    \bfpsi^{(k)} & =\nabla_{\bfpsi} \Tilde{I}(\bfm; \bfz \mid \bfX^{(k)}\exp(\hat{\bfpsi}), \calH_t) \vert_{\bfpsi = 0}  \label{eq:grad_ascent}\\
    & = \sum_{\bfV \in \calG} \nabla_{\bfpsi} \alpha_\bfV(\bfX^{(k)} \exp (\hat{\bfpsi} )) |_{\bfpsi = 0} I(\bfm; \bfz \mid \bfV, \calH_t). \notag
\end{align}
Note that the gradient is a $6$-dimensional vector since the robot pose in \textit{SE(3)} has $6$ degrees of freedom. In Sec.~\ref{subsec:occ_coll_active_mapping}, we derive a closed form expression of the gradient for a particular selection of the weighting function $\alpha_{\bfV}(\bfX)$. The gradient ascent rule of \eqref{eq:grad_ascent} concludes our method proposition for finding the (locally) most informative next robot pose when the planning horizon $T$ is set to $1$. In the following, we discuss the case where we are interested in optimizing a multi-step sequence of robot poses, given a history of prior observations $\calH_t$.

\begin{figure}[t]
    \centering
    \includegraphics[width=0.95\linewidth]{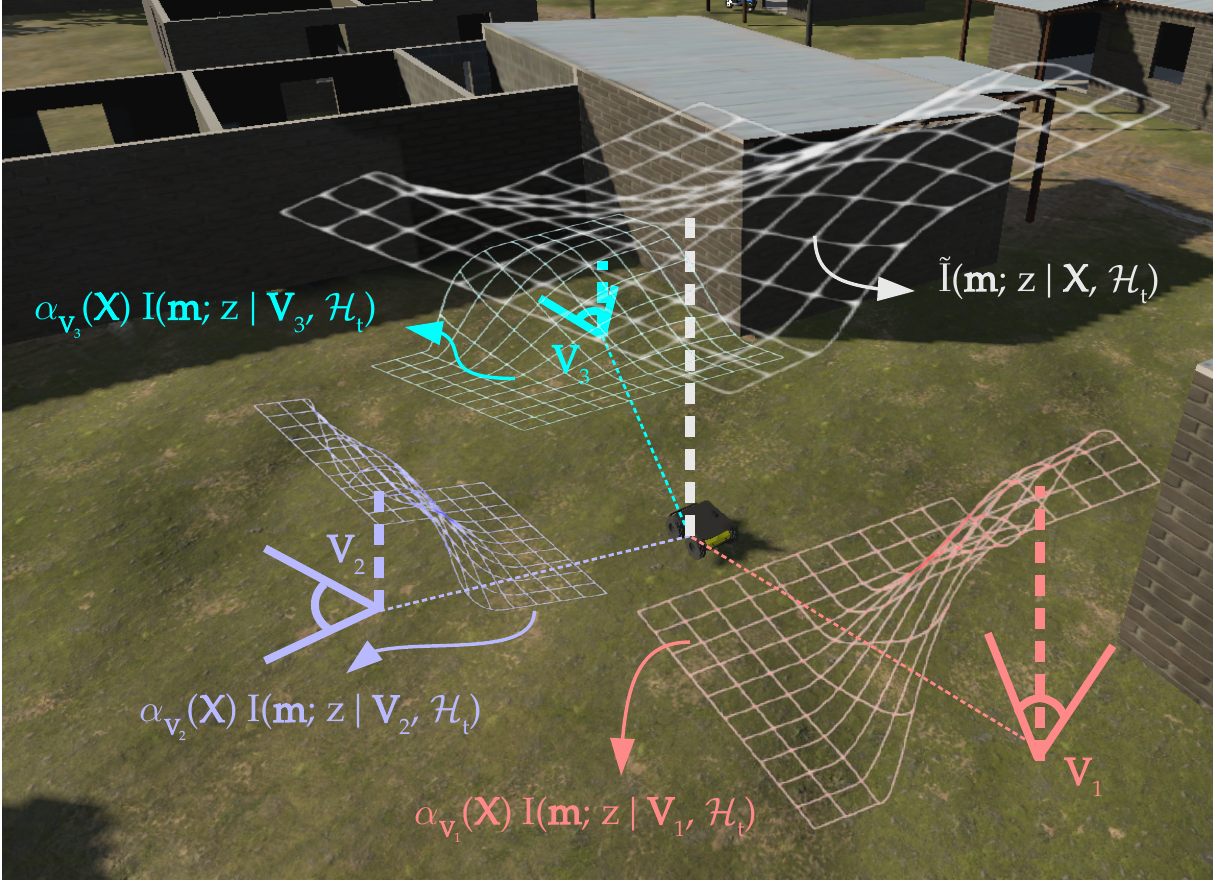}
    \caption{The SMI approximation via $3$ viewpoints, colored differently for each $\bfV_i$, $i \in \crl{1, 2, 3}$. The field of view of each viewpoint determines their corresponding SMI $I(\bfm, \bfz \mid \bfV_i, \calH_t)$, while the weight $\alpha_{\bfV_i}(\bfX)$ dictates the contribution of $\bfV_i$ to the approximated SMI $\Tilde{I}(\bfm, \bfz \mid \bfX, \calH_t)$, colored white.}
    \label{fig:smi_approx}
\end{figure}

\subsection{Trajectory Optimization for Active Mapping}
\label{subsec:traj_optim}

Here, we present the approximation of the SMI for a trajectory, rather than a single robot pose. Analogous to \eqref{eq:SMI_approx_single}, we define the approximated SMI for robot trajectory $\bfX_{t+1:t+T}$ as a convex combination over all trajectories in the set $\calG^T$. Namely, $ I(\bfm; \bfz_{t+1:t+T} \mid  \bfX_{t+1:t+T}, \calH_t) \approx  \Tilde{I}(\bfm; \bfz_{t+1:t+T} \mid \bfX_{t+1:t+T}, \calH_t) ,$ and 
\begin{equation}
\begin{aligned}
    \Tilde{I}(\bfm; \bfz_{t+1:t+T} &\mid \bfX_{t+1:t+T}, \calH_t) :=\\
    \sum_{\calV \in \calG^T} &A_{\calV}(\bfX_{t+1:t+T}) I(\bfm; \bfz_{t+1:t+T} \mid \calV, \calH_t),
\end{aligned}
\label{eq:SMI_approx_multi}
\end{equation}
where the weighting function $A_{\calV}(\bfX_{t+1:t+T})$ for a trajectory $\calV = V_{t+1:t+T}$ is defined as:
\begin{equation}
    A_{\calV}(\bfX_{t+1:t+T}) := \prod_{\tau=1}^T \alpha_{\bfV_{t+\tau}}(\bfX_{t+\tau}).
\end{equation}
The terms $\alpha_{\bfV_{t+\tau}}(\bfX_{t+\tau})$ follow the same properties as in \eqref{eq:SMI_approx_single}, making \eqref{eq:SMI_approx_multi} a convex combination of the SMI terms. Computing $I(\bfm; \bfz_{t+1:t+T} \mid \calV, \calH_t)$ requires integration over all instances of combined observations $\bfz_{t+1:t+T}$ and should be repeated for all $\calV \in \calG^T$, which is computationally infeasible during the planning time. We aim to impose a structure on the weighting function $\alpha_{\bfV_{t+\tau}}(\bfX_{t+\tau})$ that allows breaking down $I(\bfm; \bfz_{t+1:t+T} \mid \calV, \calH_t)$ into independent additive terms. The following conditions enable such a decomposition:
\begin{condition}
Given a robot pose $\bfX$, $\alpha_\bfV(\bfX)$ is non-zero only for a subset $\bar{\calG}(\bfX) \subset \calG$ of viewpoints within a distance $\xi_{{\rm max}}$ from $\bfX$.
\label{cond:1}
\end{condition}
\begin{condition}
Let $\calF \subset \bbR^3$ be the unobstructed field of view (FOV) of the sensor in robot frame with homogeneous representation $\ubar{\calF}$. Also, let $\bbU(\bfX) := \cup_{\bfV \in \bar{\calG}(\bfX)} \bfV \ubar{\calF} \subset \bbR^3$ be the union of all FOVs belonging to $\bar{\calG}(\bfX)$ in the world frame. This condition states that, for any pair of robot poses $\bfX_{i}$ and $\bfX_{j}$ ($i \neq j$) in trajectory $\bfX_{t+1:t+T}$, we have $\bbU(\bfX_{i}) \cap \bbU(\bfX_{j}) = \emptyset$.
\label{cond:2}
\end{condition}
The above conditions are sufficient for decomposing the approximate SMI of trajectory $\bfX_{t+1:t+T}$ to $T$ independent additive terms, resulting in a computationally feasible trajectory optimization formula. The main idea comes from the fact that, given the above conditions, observations $\bfz_{i}$ and $\bfz_{j}$ ($i \neq j$) made from viewpoints inside $\calG(\bfX_{i})$ and $\calG(\bfX_{j})$, respectively, are independent random variables. Please refer to the proof of the proposition below for more details.
\begin{proposition}
\label{prop:approx_smi_multi}
Under Cond.~\ref{cond:1} and \ref{cond:2}, the approximated SMI can be expressed as the sum of individual SMI approximations for each pose in the trajectory $\bfX_{t+1:t+T}$:
\begin{equation}
\begin{aligned}
    \Tilde{I}(\bfm; \bfz_{t+1:t+T} &\mid \bfX_{t+1:t+T}, \calH_t) =\\
    &\sum_{\tau=1}^T \Tilde{I}(\bfm; \bfz_{t+\tau} \mid \bfX_{t+\tau}, \calH_t).
\end{aligned}
\label{eq:approx_smi_multi}
\end{equation}
\end{proposition}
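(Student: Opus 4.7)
The plan is to unfold the definition of $\Tilde I$ in \eqref{eq:SMI_approx_multi}, use Condition~\ref{cond:1} to restrict the outer sum to trajectories whose viewpoints are in a small neighbourhood of each $\bfX_{t+\tau}$, then invoke Condition~\ref{cond:2} to decompose the joint SMI over each trajectory $\calV$ into a sum of per-step SMIs, and finally swap the order of summation to recover $\sum_{\tau} \Tilde I(\bfm;\bfz_{t+\tau}\mid \bfX_{t+\tau},\calH_t)$.

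More concretely, first I would note that by Condition~\ref{cond:1} every factor $\alpha_{\bfV_{t+\tau}}(\bfX_{t+\tau})$ vanishes unless $\bfV_{t+\tau}\in\bar\calG(\bfX_{t+\tau})$, so the sum over $\calG^T$ effectively reduces to a sum over $\prod_{\tau=1}^T \bar\calG(\bfX_{t+\tau})$. For any such trajectory $\calV$, Condition~\ref{cond:2} guarantees that the FOVs $\bfV_{t+\tau}\ubar\calF$ are pairwise disjoint in the world frame. The next step, which I expect to be the main technical point, is to argue that the joint information decomposes additively:
\begin{equation}
I(\bfm;\bfz_{t+1:t+T}\mid \calV,\calH_t) \;=\; \sum_{\tau=1}^T I(\bfm;\bfz_{t+\tau}\mid \bfV_{t+\tau},\calH_t).
\label{eq:additive_joint_SMI}
\end{equation}
To justify \eqref{eq:additive_joint_SMI}, I would partition $\bfm$ into the cells $\bfm_\tau$ intersecting the FOV $\bfV_{t+\tau}\ubar\calF$ together with a ``rest'' block $\bfm_{\rm r}$; disjointness of the FOVs makes this partition well defined. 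Since the beam-based observation model $p(\bfz_{t+\tau}\mid\bfm,\bfV_{t+\tau})$ depends only on the cells traversed by the rays from $\bfV_{t+\tau}$, the measurement $\bfz_{t+\tau}$ is independent of all $\bfm_\sigma$ for $\sigma\ne\tau$ and of $\bfm_{\rm r}$ given $\bfm_\tau$. Combined with the independence of the prior PMF $p_t$ across the disjoint cell blocks (or, equivalently, working with the conditional entropy chain rule and dropping vanishing cross terms), this yields conditional independence of $\bfz_{t+1},\dots,\bfz_{t+T}$ given $\bfm$ and, via the standard identity $I(\bfm;\bfz_{t+\tau})=I(\bfm_\tau;\bfz_{t+\tau})$, gives \eqref{eq:additive_joint_SMI}.

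Finally, I would substitute \eqref{eq:additive_joint_SMI} back into \eqref{eq:SMI_approx_multi}, expand $A_\calV = \prod_\tau \alpha_{\bfV_{t+\tau}}(\bfX_{t+\tau})$, and interchange the outer sum over $\calV\in\calG^T$ with the inner sum over $\tau$. For each fixed $\tau$, the sum factorises into a product over the other indices $\sigma\ne\tau$, each of which equals $\sum_{\bfV\in\calG}\alpha_\bfV(\bfX_{t+\sigma})=1$ by the normalisation in \eqref{eq:SMI_approx_single}. The surviving factor in the $\tau$-th slot is exactly $\sum_{\bfV_{t+\tau}\in\calG}\alpha_{\bfV_{t+\tau}}(\bfX_{t+\tau})\,I(\bfm;\bfz_{t+\tau}\mid \bfV_{t+\tau},\calH_t) = \Tilde I(\bfm;\bfz_{t+\tau}\mid \bfX_{t+\tau},\calH_t)$, which gives \eqref{eq:approx_smi_multi}. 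The hardest step is the information-theoretic decomposition in \eqref{eq:additive_joint_SMI}; the remaining algebra is bookkeeping made possible by the product form of $A_\calV$ and the simplex constraint on $\alpha_\bfV$.
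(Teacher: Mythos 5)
Your proposal is correct and follows essentially the same route as the paper: both arguments hinge on the additive decomposition of the joint SMI over the disjoint FOVs guaranteed by Conditions~\ref{cond:1} and~\ref{cond:2}, followed by interchanging the sums and collapsing the factors $\sum_{\bfV}\alpha_\bfV(\bfX_{t+\sigma})=1$ for $\sigma\neq\tau$. If anything, you are more careful than the paper, which simply asserts the independence of observations from disjoint FOVs, whereas you spell out the two ingredients it rests on (conditional independence of the beams given the map, plus marginal independence via the block partition of $\bfm$ and the independent cell prior).
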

\begin{proof}
See Appendix~\ref{app:approx_smi_multi}.
\end{proof}
The result of Prop.~\ref{prop:approx_smi_multi} enables computationally feasible trajectory optimization for robot exploration in an unknown environment. Since each term in \eqref{eq:approx_smi_multi} is only dependent upon a single pose in the trajectory, the gradient ascent rule in \eqref{eq:grad_ascent} can be directly employed to update each robot pose $\bfX_{t + \tau}$, $\tau \in \crl{1, \ldots, T}$. In the following part, we introduce a practical gradient-based solution to the problem of active mapping stated in \eqref{eq:reward_func}.

\subsection{Active Mapping via Gradient Ascent}
\label{subsec:occ_coll_active_mapping}

A key advantage of gradient-based optimization is the possibility of adding various reward or penalty terms to the objective function, enabling achievement of a more complex optimization goal. We begin by defining the collision penalty term $C(\bfX_{t+1:t+T})$ in the objective function of \eqref{eq:reward_func}, which is responsible for driving the optimized robot pose away from obstacles within the environment. However, since we do not know the map in prior, we resort to the estimation of the map to extract the obstacles.
%
\begin{definition}
\label{def:free_dist}
Let $\hat{\calE}_f(p_t(\bfm))$ be the maximum-likelihood estimation of the free space at time $t$. For a position $\bfp \in \bbR^3$, we define \textit{free distance} as follows:
\begin{equation}
    d(\bfp, p_t(\bfm)) = \min_{\bfb \in \partial\hat{\calE}_f(p_t(\bfm))} \|\bfp - \bfb\|_2.
\end{equation}
\end{definition}
It is important to consider that large mutual information occurs near the boundary between the free space and the unknown parts of the map \cite{julian}. Therefore, one should seek a balance between large clearance from obstacles and informativeness of observation made from the resulting robot pose. We define the collision cost $C(\bfX_{t+1:t+T})$ as sum of the log-values of inverse free distance for each pose $\bfX_{t+\tau}$ with position $\bfp_{t+\tau}$ in the trajectory:
\begin{equation}
    C(\bfX_{t+1:t+T}) = - \sum_{\tau = 1}^T \log{(d(\bfp_{t+\tau}, p_t(\bfm)))}.
\end{equation}
Using a logarithmic scale causes large penalty for poses close to obstacles, while it does not discourage approaching the unknown region from a safe distance due to its suppressed gradient over large inputs.

In addition to the collision cost, we add a penalty term to the objective function to enforce Cond.~\eqref{cond:2} during each planning phase, minimizing the overlap among the sensor FOVs $\bfX_{t+\tau} \ubar{\calF}$ in the candidate trajectory. We consider a pair-wise penalty term for poses within the trajectory as follows:
%
\begin{equation}
    \begin{aligned}
    q(\bfp_i, \bfp_j) &= \max\crl{0, 2 \delta_{q}-\|\bfp_i - \bfp_j\|_2}^2\\
    \delta_{q} &= |\calF| + \xi_{{\rm max}}, 
\end{aligned}
\label{eq:overlap_penalty}
\end{equation}
where $\bfp_i$ and $\bfp_j$ are robot positions for poses $\bfX_i$ and $\bfX_j$, respectively, $|\calF|$ is the diameter of $\calF$, and $\xi_{{\rm max}}$ denotes the maximum distance from robot pose $\bfX$ to a viewpoint in $\bar{\calG}(\bfX)$ (cf. Cond.~\ref{cond:1}). The penalty term \eqref{eq:overlap_penalty} effectively discourages the case where the 2-norm ball $\calB(\bfp_i, \delta_q)$ centered around $\bfp_i$ with radius $\delta_q$ coincides with $\calB(\bfp_j, \delta_q)$. Since $\calB(\bfp_i, \delta_q)$ contains $\bbU(\bfX_i)$, $q(\bfp_i, \bfp_j) = 0$ is sufficient to ensure Cond.~\eqref{cond:2} is not violated for a pair of poses $\bfX_i$ and $\bfX_j$. Note that Cond.~\eqref{cond:1} is an inherent property of the weighting function $\alpha_\bfV(\bfX)$ and can be evaluated offline. Putting all the components together, the differentiable objective function for gradient-based active mapping is expressed as follows:
%
\begin{equation}
\begin{aligned}
    &f(\bfX_{t+1:t+T}) =\\
    &\sum_{\tau=1}^T \Bigg( \sum_{\bfV \in \bar{\calG}(\bfX_{t+\tau})} \alpha_\bfV(\bfX_{t+\tau}) [I(\bfm; \bfz \mid \bfV, \calH_t) +\\
    &\gamma_c \log(d(\bfv, p_t(\bfm)))] - \frac{\gamma_q}{2} \sum_{\substack{\tau'=1 \\ \tau' \neq \tau}}^T q(\bfp_{t+\tau}, \bfp_{t+\tau'}) \Bigg),
\end{aligned}
\label{eq:obj_func}
\end{equation}
where $\bfv$, $\bfp_{{t+\tau}}$, and $\bfp_{{t+\tau'}}$ are the corresponding positions of poses $\bfV$, $\bfX_{t+\tau}$, and $\bfX_{t+\tau'}$.

\begin{figure}[t]
  \hbox{\hspace{4ex}\includegraphics[width=0.9\linewidth]{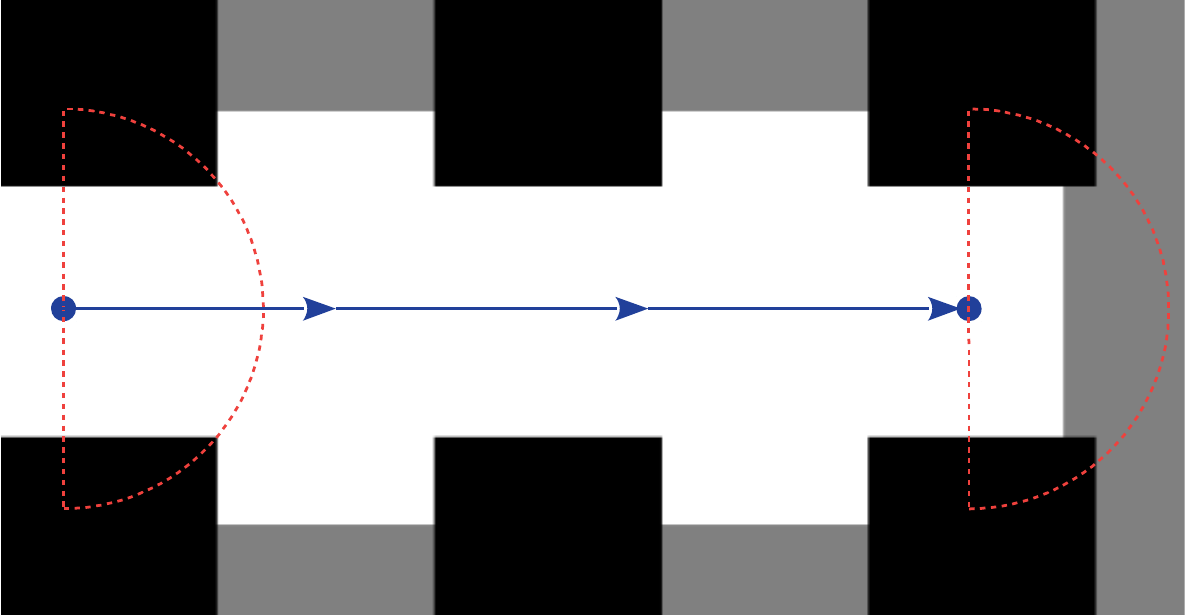}}
  \centering
  \includegraphics[width=0.96\linewidth]{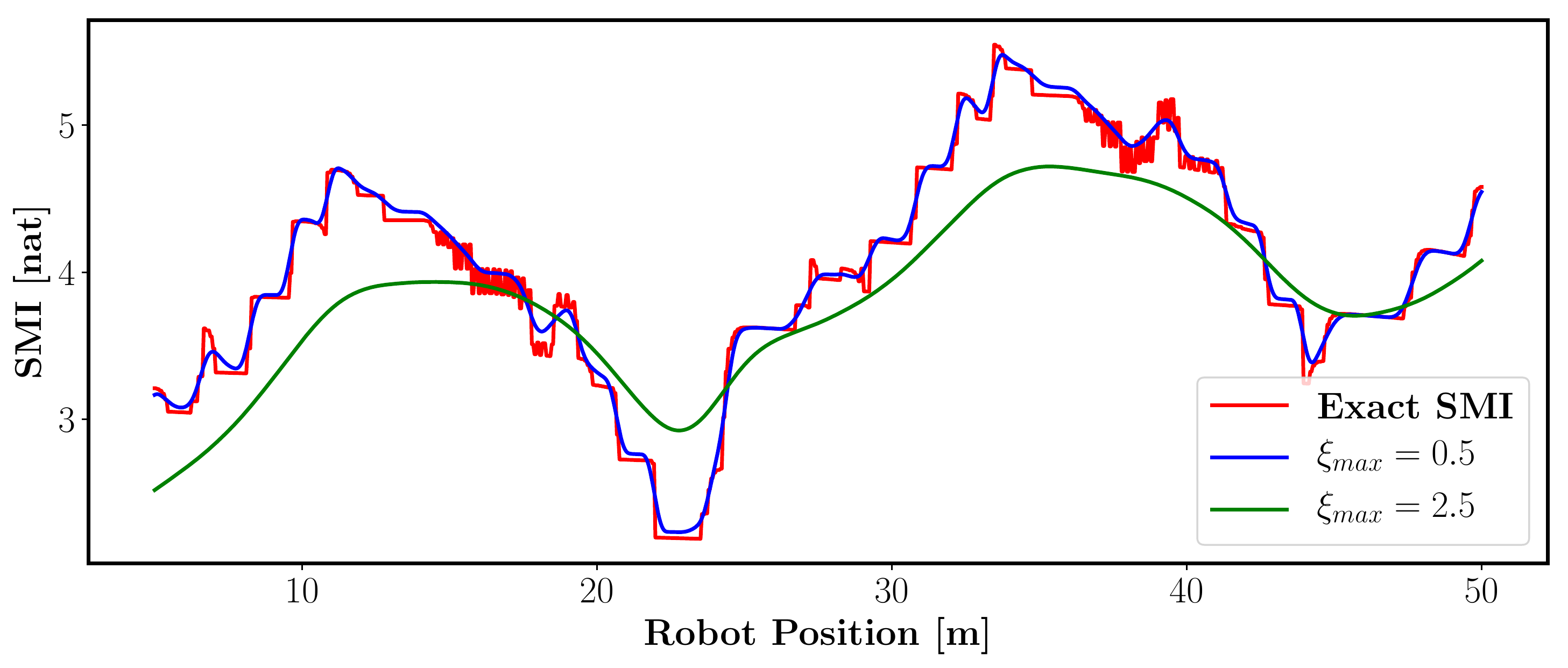}
  \caption{Example of the SMI approximation using \eqref{eq:alpha}. Top: Partially known environment where black, white, and gray regions depict occupied, free, and unknown areas. The robot equipped with a $180^{\circ}$ range sensor moves along a straight line shown as the blue arrowed segment. Bottom: Exact value of the SMI evaluated at each robot position alongside approximate SMI values for various radii $\xi_{max}$.}
  \label{fig:approx_smi_path}
\end{figure}

So far we assumed a general definition for the differentiable weighting function $\alpha_{\bfV}(\bfX)$ that satisfies Cond.~\ref{cond:1} and \ref{cond:2}. We use the following definition for $\alpha_\bfV(\bfX)$:
\begin{equation}
\begin{aligned}
    &\alpha_\bfV(\bfX) =\frac{\upsilon(\delta(\bfxi_{\bfX, \bfV})) (1 + \cos{\delta(\bfxi_{\bfX, \bfV})})}{\sum_{\bfU \in \calG} \upsilon(\delta(\bfxi_{\bfX, \bfU})) (1 + \cos{\delta(\bfxi_{\bfX, \bfU})})},
\end{aligned}
\label{eq:alpha}
\end{equation}
where $\delta(\bfxi_{\bfX, \bfV})$ is the distance between poses $\bfV,\bfX \in \!SE(3)$:
\begin{equation}
    \delta(\bfxi_{\bfX, \bfV}) = \frac{\pi}{\xi_{{\rm max}}} \sqrt{\bfxi_{\bfX, \bfV}^\top \Gamma \bfxi_{\bfX, \bfV}}, \quad \bfxi_{\bfX, \bfV} = \log(\bfX^{-1} \bfV)^\vee.
\label{eq:SE3-dist}
\end{equation}
Here, $\Gamma$ is a diagonal matrix containing positive coefficients and $\bfxi_{\bfX, \bfV}$ is the difference of two \textit{SE(3)} poses in the local frame of $\bfX$. The indicator function $\upsilon(\delta(\bfxi_{\bfX, \bfV}))$ in \eqref{eq:alpha} is equal to one only when $0 \leq \delta(\bfxi_{\bfX, \bfV}) \leq \pi$ and zero otherwise, which in effect limits the SMI approximation to the viewpoints within the radius $\xi_{{\rm max}}$ from $\bfX$, satisfying Cond.~\ref{cond:1}. Note that the discontinuity of $\upsilon(\delta(\bfxi_{\bfX, \bfV})$ occurs at the same point where $1 + \cos{\delta(\bfxi_{\bfX, \bfV})} = 0$; hence $\alpha_\bfv(\bfX)$ is differentiable with respect to all poses $\bfX \in \textit{SE(3)}$. Fig.~\ref{fig:approx_smi_path} shows an example of the accurate SMI evaluation at different robot positions compared to the approximate SMI of \eqref{eq:SMI_approx_single} using the weighting function of \eqref{eq:alpha}. Note that, while being differentiable, the approximations follow the occlusion-aware behavior of the exact SMI, i.e. they peak at the positions where more of the unexplored region is visible. With $\alpha_\bfv(\bfX)$ specified as \eqref{eq:alpha}, we can compute a closed-form expression for the gradient of the approximate SMI as follows.
\begin{proposition}
\label{prop:approx_smi_grad}
Using the weighting function of \eqref{eq:alpha}, the gradient of the approximate SMI with respect to the robot pose can be obtained as follows:
\begin{equation}
\begin{aligned}
     &\nabla_{\bfpsi} \Tilde{I}(\bfm; \bfz \mid \bfX \exp{(\hat{\bfpsi})}, \calH_t)|_{\bfpsi = 0} 
     = \left(\frac{\pi}{\xi_{{\rm max}} \eta(\bfX)} \right)^2\\
     &\qquad\qquad\qquad\qquad \times \calJ_R^\top(\bfxi_\bfX) \sum_{\bfV \in \Bar{\calG}(\bfX)} \Lambda_\bfV(\bfX) \bfxi_{\bfX, \bfV},\\
     &\Lambda_\bfV(\bfX)  = [\eta(\bfX) I(\bfm; \bfz \mid \bfV, \calH_t) - \beta(\bfX)] \\
     &\qquad\qquad\qquad\qquad \times \frac{\sin{\delta(\bfxi_{\bfX, \bfV})}}{\delta(\bfxi_{\bfX, \bfV})} \calJ_R^\top(-\bfxi_{\bfX, \bfV}) \Gamma,\\
     &\eta(\bfX) = |\Bar{\calG}(\bfX)| + \sum_{\bfU \in \Bar{\calG}(\bfX)} \cos{\delta(\bfxi_{\bfX, \bfU})},\\
     &\beta(\bfX) = \sum_{\bfU \in \Bar{\calG}(\bfX)} I(\bfm; \bfz \mid \bfU, \calH_t) (1 + \cos{\delta(\bfxi_{\bfX, \bfU})}),\\
     &\bfxi_\bfX = \log(\bfX)^\vee,
\end{aligned}
\label{eq:approx_smi_grad}
\end{equation}
\noindent where $\Bar{\calG}(\bfX)$ is the subset of viewpoints $\bfV \in \calG$ that $\upsilon(\delta(\bfxi_{\bfX, \bfV})) = 1$, and $\calJ_R(.)$ is the right Jacobian of \textit{SE(3)}.
\end{proposition}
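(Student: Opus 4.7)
The plan is to differentiate $\Tilde I$ by chain rule. Since $I(\bfm;\bfz\mid\bfV,\calH_t)$ does not depend on the robot pose, all $\bfpsi$-dependence flows through the weights $\alpha_\bfV$, and the heart of the computation reduces to a smooth derivative of a ratio of functions of the pose-distances $\delta(\bfxi_{\bfX,\bfV})$.

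First, I would rewrite the approximation as $\Tilde I = \beta(\bfX)/\eta(\bfX)$, identifying the quantities defined in the proposition as the numerator and denominator after pulling the indicator $\upsilon$ outside the sum so that it runs over $\Bar{\calG}(\bfX)$. Applying the quotient rule with $\nabla_\bfpsi(1+\cos\delta_\bfV) = -\sin\delta_\bfV\,\nabla_\bfpsi\delta_\bfV$ collapses the two derivatives to a single sum,
\begin{equation*}
\nabla_\bfpsi\Tilde I = -\frac{1}{\eta^2}\sum_{\bfV\in\Bar{\calG}(\bfX)}\bigl(\eta\,I(\bfm;\bfz\mid\bfV,\calH_t)-\beta\bigr)\sin\delta_\bfV\,\nabla_\bfpsi\delta_\bfV,
\end{equation*}
which already exposes the scalar factor $\eta\,I(\bfm;\bfz\mid\bfV,\calH_t)-\beta$ of $\Lambda_\bfV$ and contributes the $1/\eta^2$ in the overall prefactor.

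Second, I would compute $\nabla_\bfpsi\delta_\bfV$ from its square $\delta_\bfV^2 = (\pi/\xi_{\rm max})^2\,\bfxi_{\bfX,\bfV}^\top\Gamma\,\bfxi_{\bfX,\bfV}$; the chain rule yields $\nabla_\bfpsi\delta_\bfV = (\pi/\xi_{\rm max})^2\,\delta_\bfV^{-1}\,J_\bfV^\top\Gamma\,\bfxi_{\bfX,\bfV}$, with $J_\bfV := \partial\bfxi_{\bfX\exp(\hat\bfpsi),\bfV}/\partial\bfpsi\big|_{\bfpsi=0}$. Substituting this into the previous display produces the $\sin\delta_\bfV/\delta_\bfV$ factor appearing in $\Lambda_\bfV$ as well as the full prefactor $(\pi/(\xi_{\rm max}\eta))^2$, and reduces everything except $J_\bfV$ to the form stated in the proposition.

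The main obstacle is the explicit form of $J_\bfV$ on $\textit{SE(3)}$. Writing $\bfxi_{\bfX\exp(\hat\bfpsi),\bfV} = \log(\exp(-\hat\bfpsi)\exp(\hat{\bfxi}_{\bfX,\bfV}))^\vee$ and applying the first-order BCH expansion from \cite[Ch.~7]{BarfootBook}, together with the identity $\calJ_L(\bfxi) = \calJ_R(-\bfxi)$, gives $J_\bfV$ in a form involving a right Jacobian evaluated at $-\bfxi_{\bfX,\bfV}$; this produces the inner factor $\calJ_R^\top(-\bfxi_{\bfX,\bfV})$ inside $\Lambda_\bfV$ after transposition, with the overall sign flip cancelling the minus from Step~1. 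Reconciling this local linearization with the outer factor $\calJ_R^\top(\bfxi_\bfX)$ requires an additional use of the standard $\textit{SE(3)}$ Jacobian identities relating right-perturbation derivatives at $\bfxi_{\bfX,\bfV}$ to those expressed via the global parameterization $\bfxi_\bfX = \log(\bfX)^\vee$; this bookkeeping between two Lie-algebra charts is the principal technical difficulty. Once $J_\bfV$ is placed in the matching form and $\Gamma$ is absorbed into the definition of $\Lambda_\bfV$, direct substitution into the intermediate expression reproduces the closed form in the statement.
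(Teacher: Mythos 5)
Your argument follows the paper's proof essentially step for step: both exploit the linearity of $\Tilde{I}$ in the weights, apply the quotient rule to the normalized $(1+\cos\delta)$ weights (your $\beta/\eta$ packaging is just the paper's per-$\alpha_\bfV$ quotient-rule computation summed up front), chain-rule through $\cos\delta$ and $\delta(\bfxi)$ to expose the $\sin\delta_\bfV/\delta_\bfV$ and $(\pi/(\xi_{\rm max}\eta))^2$ factors, and reduce the remaining work to the derivative of $\log(\exp(-\hat{\bfpsi})\bfX^{-1}\bfV)^\vee$ at $\bfpsi=0$. The one step you defer to ``standard \textit{SE(3)} Jacobian identities''---producing the factor $\calJ_R(-\bfxi_{\bfX,\bfV})\calJ_R(\bfxi_\bfX)$ whose transpose yields both the inner $\calJ_R^\top(-\bfxi_{\bfX,\bfV})$ and the outer $\calJ_R^\top(\bfxi_\bfX)$---is exactly the step the paper also asserts without derivation, so your proof matches the published one in both route and level of detail.
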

\begin{proof}
See Appendix~\ref{app:approx_smi_grad}.
\end{proof}
%
The objective function \eqref{eq:obj_func} can used with any Bayesian sensing model \eqref{eq:bayes_rule}. In the context of occupancy grid mapping via a range sensor, the log-odds technique using a narrow beam \cite[Ch.~9]{ProbabilisticRoboticsBook} is a natural choice for the sensing model which provides a computationally simple yet accurate representation of the environment. Using such model, the SMI between the grid map and a beam $z$ can be written as a weighted sum of occlusion probabilities~\cite{julian, ssmi_icra, fsmi}:
\begin{equation}
\begin{aligned}
    I(\bfm; z \mid \bfX, \calH_t) = \sum_{i =1}^{N_{\bfX}} w_i(\calH_t) p(i \mid \calH_t)
\end{aligned}
\label{eq:smi_conceptual}
\end{equation}
where $N_{\bfX}$ is the maximum number of map cells that can be visited by a sensor beam emitted from pose $\bfX$ and $p(i | \calH_t)$ is the probability that the beam is occluded at $i$-th cell along its path. Hence, the corresponding SMI expression is occlusion-aware, namely the value of the mutual information is directly controlled by whether or not the observations are occluded.

The benefit of using the occlusion-aware SMI formulation of \eqref{eq:smi_conceptual} comes with the computational burden of ray-tracing during each evaluation of the SMI, since \eqref{eq:smi_conceptual} requires identifying the map cells along the observation beam. Going back to the objective function \eqref{eq:obj_func}, it is required to perform $B \sum_{\tau=1}^T |\hat{\calG}(\bfX_{t+\tau})|$ ray-tracings for a range sensor with $B$ beams during every evaluation of $f(\bfX_{t+1:t+T})$. Since all viewpoints in $\calG$ are located at map cell centers, the ray-tracing for a viewpoint $\bfV$ with position $\bfv$ and orientation $\bfR$ can be obtained from a ray-tracing from the map origin with the same orientation and simply translated by $\bfv$. Considering that all viewpoints in $\calG$ have a fixed set of orientations $\calR$, we skip online ray-tracing by computing $|\calR|$ ray-tracings from the origin only once and query ray-tracings by applying translation $\bfv$ for each viewpoint $\bfV$. To further accelerate the optimization, we avoid repeated evaluations of $I(\bfm; \bfz \mid \bfV, \calH_t)$ by storing the values for each viewpoint $\bfV$ until the map is updated. For computing the free distance $d(\bfv, p_t(\bfm))$, we once again use the fact that all viewpoints are located at map cell centers; therefore we can obtain $d(\bfv, p_t(\bfm))$ from the distance transform of $\hat{\calE}_f(p_t(\bfm))$ in Def.~\ref{def:free_dist} scaled by the map resolution. This needs to be computed only once for each planning step, since the distance transform provides values of $d(\bfv, p_t(\bfm))$ for all cell centers $\bfv$.

Gradient-based optimization of the objective function \eqref{eq:obj_func} allows local maximization of trajectory informativeness as well as distance from obstacles. Given an initial trajectory which can be provided by frontier-based exploration~\cite{frontier}, we perform gradient ascent for $n_{\rm max}$ steps or until the improvement in $f(\bfX_{t+1:t+T})$ is less than $0.1\%$. The output of this optimization will be a kinematically feasible trajectory $\bfX_{t+1:t+T}^* \in \textit{SE(3)}^T$ which can be tracked by a low-level controller specified by the robot dynamics. Alg.~\ref{alg:active_mapping} summarizes our procedure for occlusion and collision-aware active mapping. We evaluate the performance of the proposed method in the next section.

\begin{algorithm}[t]
\caption{Active Mapping via Gradient Ascent}\label{alg:active_mapping}
\begin{algorithmic}[1]
  \renewcommand{\algorithmicrequire}{\textbf{Input:}}
  \renewcommand{\algorithmicensure}{\textbf{Output:}}
  \Require $\bfX_t$, $p_t(\bfm)$, $T$, $n_{\rm max}$, $pcrt$
  \If{$pcrt$ is None}
    \State $pcrt = \Call{preComputeRayTracing}{ }$
  \EndIf
  \State $\bfd = \Call{distanceTransform}{p_t(\bfm)}$
  \State $k \leftarrow 0$
  \State $\bfX^{(k)}_{t+1:t+T} = \Call{initPath}{\bfX_t, p_t(\bfm), T}$
  \While{not $\Call{terminate}{f(\bfX^{(k)}_{t+1:t+T}), n_{\rm max}}$}
  \State $\bfX^{(k+1)}_{t+1:t+T} = \Call{gradAscent}{f(\bfX^{(k)}_{t+1:t+T}), pcrt, \bfd}$
  \State $k \leftarrow k + 1$
  \EndWhile
  \State \Return $\bfX^{(k)}_{t+1:t+T}$
\end{algorithmic}
\end{algorithm}
\section{Experiments}
\label{sec:experiments}

In this section we evaluate the performance of our proposed active mapping method in several simulated and real-world experiments in comparison to baseline exploration strategies: frontier-based exploration (Frontier)~\cite{frontier}, FSMI~\cite{fsmi}, SSMI~\cite{ssmi_icra}, and optimized next best view using RRT (O-NBV-\textit{RRT})~\cite{DiAIM}. In Sec.~\ref{subsec:2D-exp-sim}, we perform 2-D active mapping using the proposed method alongside the baselines in a set of $10$ randomly generated 2-D environments. Sec.~\ref{subsec:3D-exp-sim} contains large-scale active mapping using OctoMap representation~\cite{octomap} in a simulated 3-D Unity environment. Lastly, in Sec.~\ref{subsec:real-world-exp} we demonstrate the performance of our method in a real-world environment using a wheeled ground robot.

Across all experiments, each method uses the log-odds mapping from range measurements \cite[Ch.~9]{ProbabilisticRoboticsBook} but select robot trajectories based on their own criteria. Moreover, during each planning step for 2-D exploration, we identify the frontiers using edge detection over the most likely map at time $t$, i.e. the mode of $p_t(\bfm)$. Then, we cluster the frontier map cells by detecting the connected components. We plan a path from robot pose $\bfX_t$ to the center of each frontier using $A^*$ graph search to produce candidate paths for FSMI and SSMI. For Frontier, we pick the path that maximizes the ratio between the frontier size and path length. For O-NBV-\textit{RRT}, we pick the goal of the same path used by Frontier, but instead plan an \textit{RRT} path towards the goal. Our method also uses the same path used by Frontier as the initial trajectory. For exploration in 3-D environments, we first project the most likely OctoMap at time $t$ onto the ground level in order to derive a 2-D occupancy map and proceed with similar steps as in 2-D exploration. We selected maximum size of neighboring viewpoints $\xi_{\rm max} = 2$, maximum number of gradient ascent iterations $n_{\rm max} = 50$, step size $l = 10$, collision penalty weight of $\gamma_c = 5 \times 10^{-4}$, and $\gamma_q = 1$. For $\Gamma$ in \eqref{eq:SE3-dist}, we use a diagonal matrix with $[1, 1, 0.1]$ as the elements on the diagonal.

\subsection{2-D Active Mapping in Simulation}
\label{subsec:2D-exp-sim}

\begin{figure}[t]
    \includegraphics[width=\linewidth]{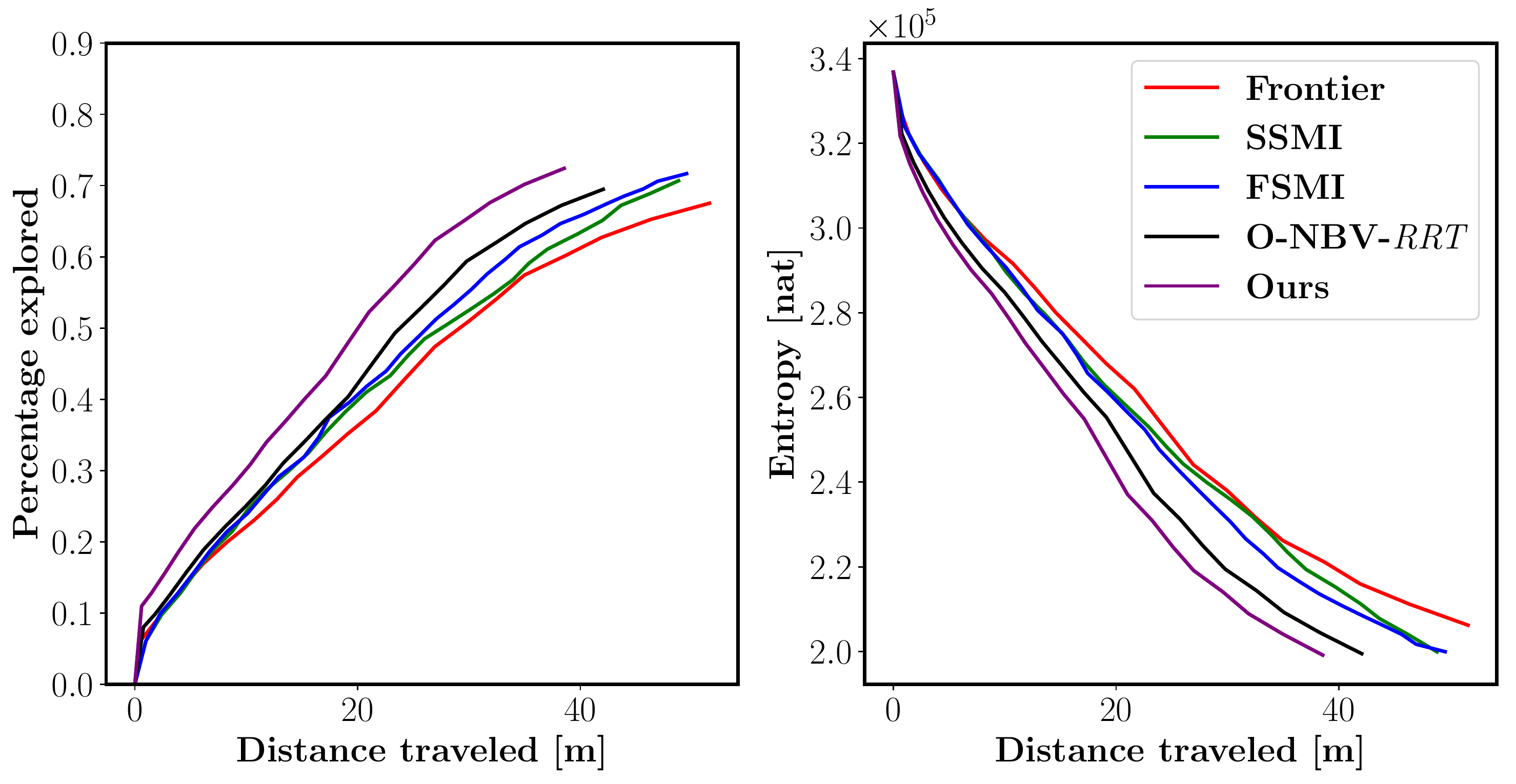}
    \caption{2-D simulated active mapping performance compared among various active mapping strategies. The results are averaged over $10$ randomly generated environments with $3$ random starting positions for each environment.}
    \label{fig:2D-exp-sim-comparison}
\end{figure}

Here we compare our proposed method to Frontier, FSMI, SSMI, and O-NBV-\textit{RRT}. The experiments are performed in $10$ randomly generated 2-D environments with dimensions $60m \times 60m$ and $3$ random starting positions for each instance. We consider a robot equipped with a LiDAR sensor of range $10m$ and $90^{\circ}$ field of view where each measurement beam is added with Gaussian noise of $\calN(0, 0.1)$. Fig.~\ref{fig:2D-exp-sim-comparison} shows the 2-D simulation results among all methods. We witness superior exploration performance of our proposed method, which can be attributed to the fact that our method applies occlusion-aware optimization for each pose in a candidate trajectory; resulting in high utilization of information gathering opportunities. The subpar performance of  O-NBV-\textit{RRT} can be caused by the fact that O-NBV-\textit{RRT} does not account for sensor noise during pose optimization. Robot trajectories obtained by Frontier, FSMI, and SSMI are computed using $A^*$ search, and no further optimization is performed over single robot poses along the trajectory; resulting in a very restricted set of candidate solutions.

\subsection{Exploration in 3-D Unity Simulation}
\label{subsec:3D-exp-sim}

\begin{figure*}[t]
    \begin{subfigure}[t]{0.30\linewidth}
    \includegraphics[width=\linewidth]{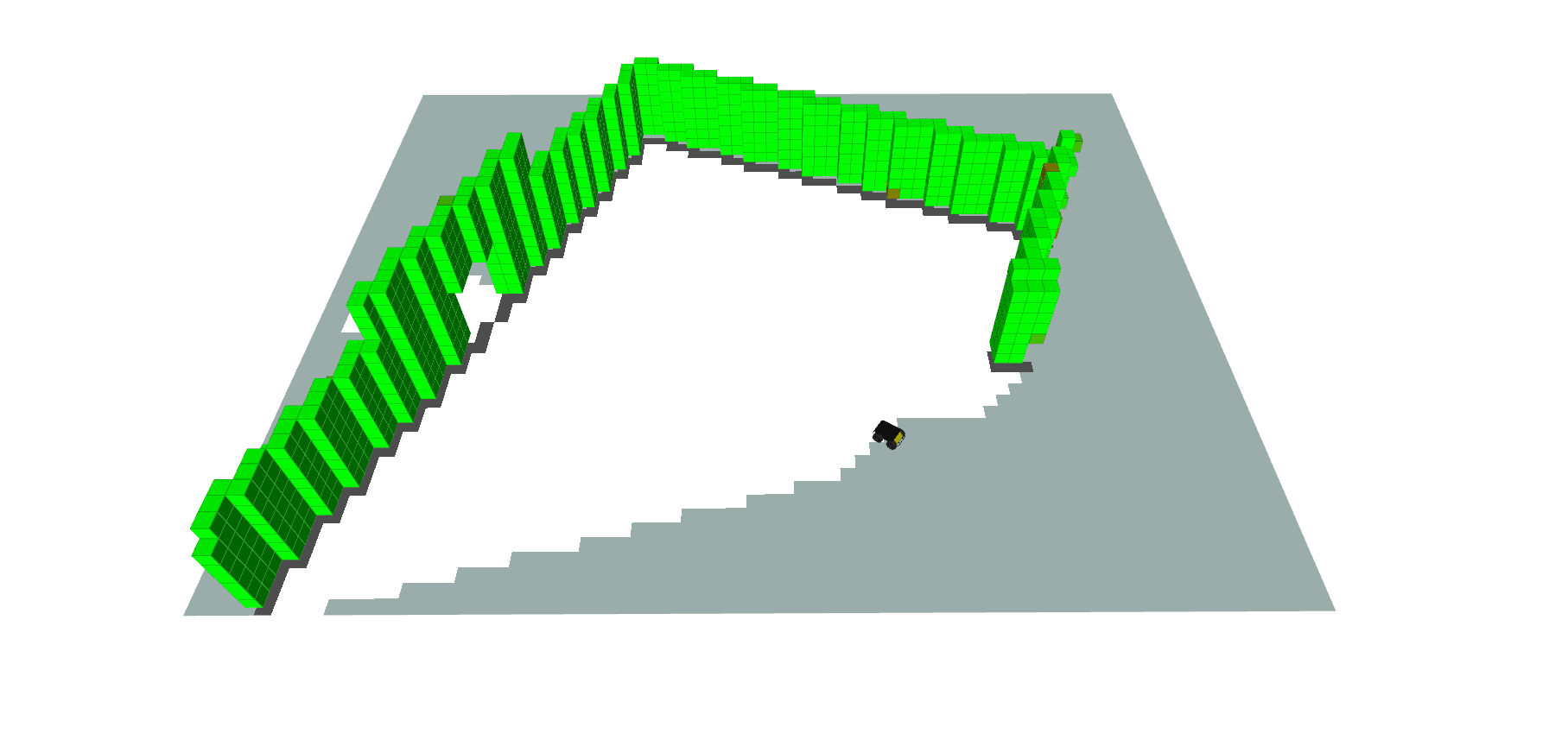}
    \captionsetup{justification=centering}
    \caption{The robot begins exploration.}
    \label{fig:3D-exp-sim-timelapse-a}
    \end{subfigure}%
    \hfill%
    \begin{subfigure}[t]{0.30\linewidth}
    \includegraphics[width=\linewidth]{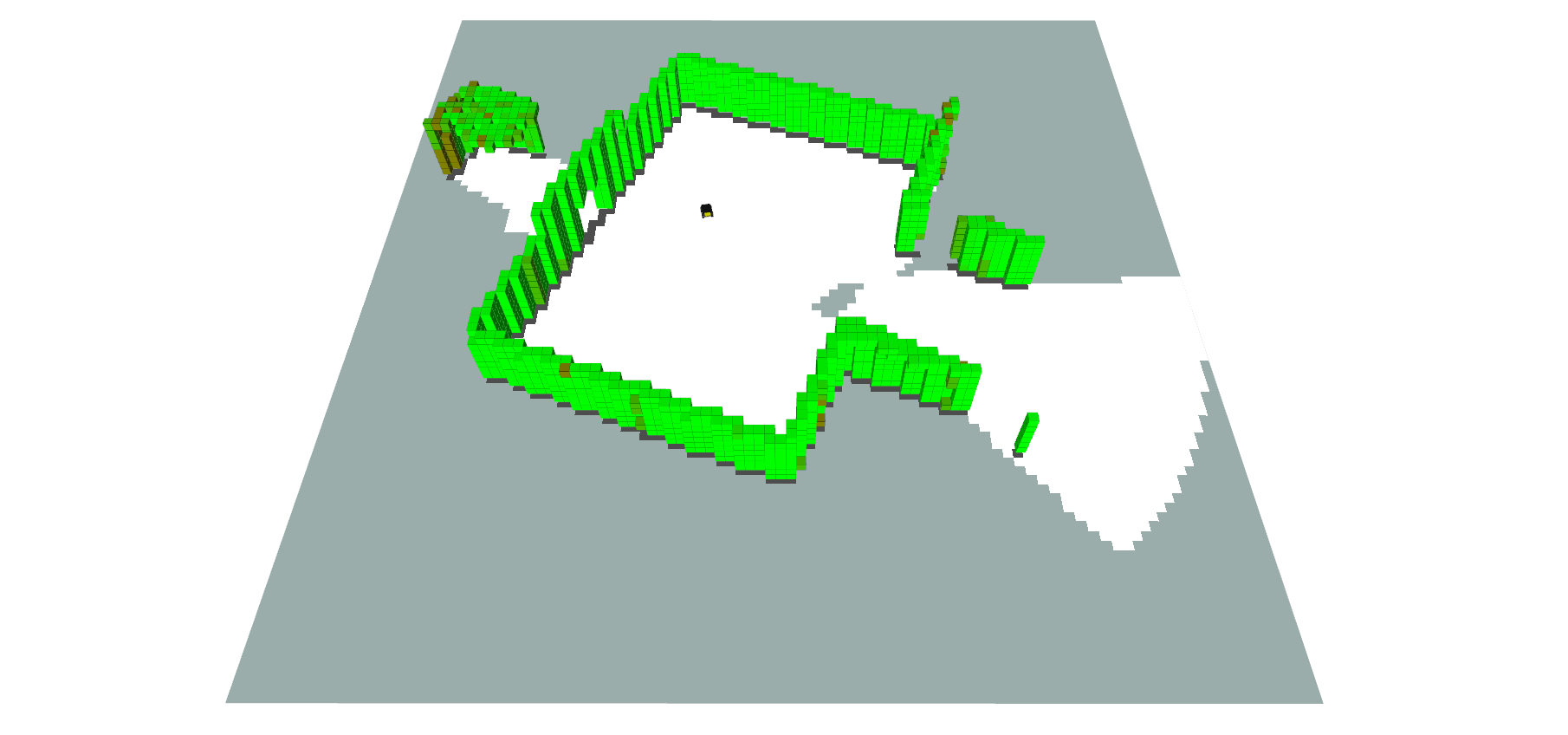}
    \caption{During the first $3$ iterations, the robot tries to build the map of its immediate vicinity.}
    \label{fig:3D-exp-sim-timelapse-b}
    \end{subfigure}%
    \hfill%
    \begin{subfigure}[t]{0.30\linewidth}
    \includegraphics[width=\linewidth]{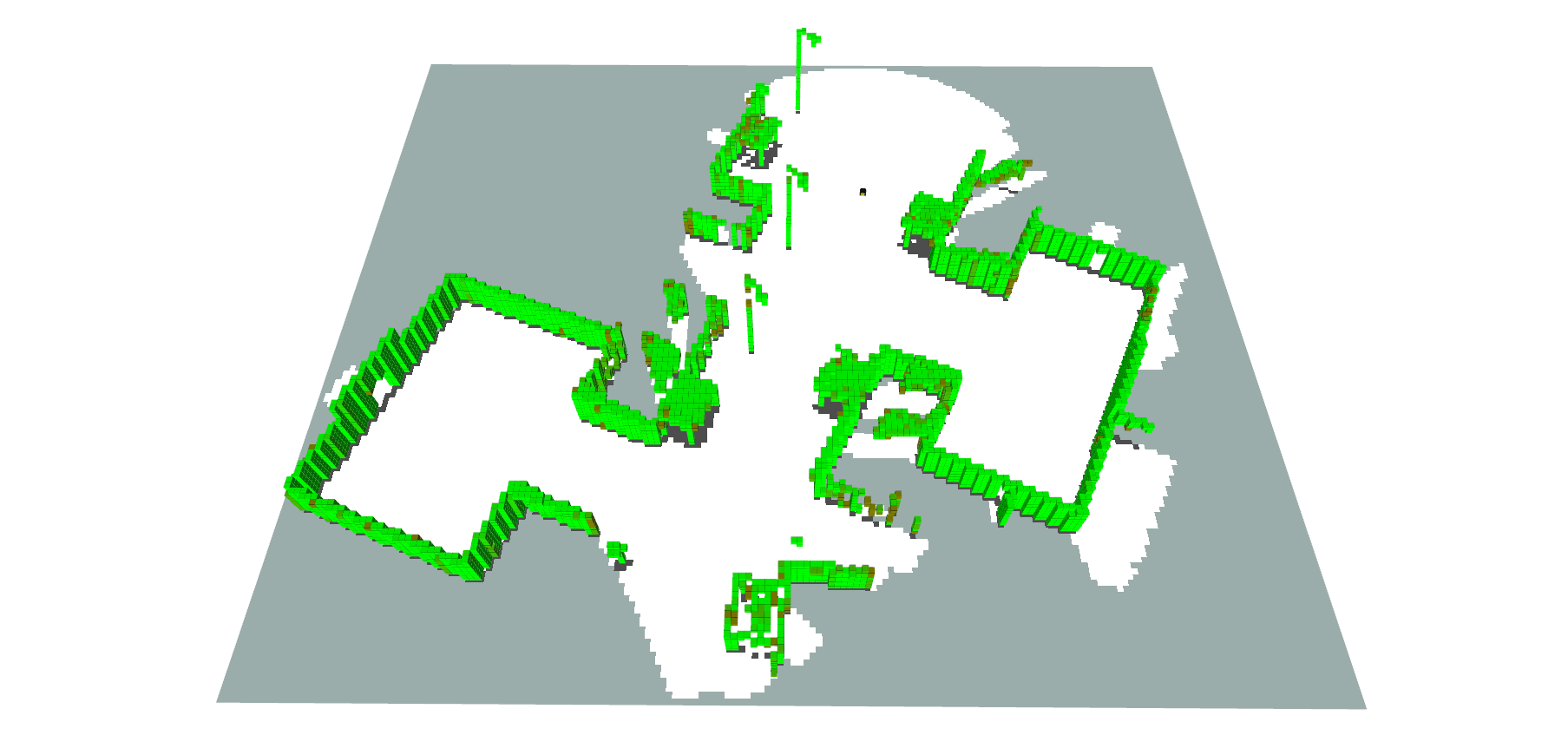}
    \caption{The robot explores the boundaries of the known region at iteration $10$.}
    \label{fig:3D-exp-sim-timelapse-c}
    \end{subfigure}\\
    \begin{subfigure}[t]{0.45\linewidth}
    \centering
    \includegraphics[width=\linewidth]{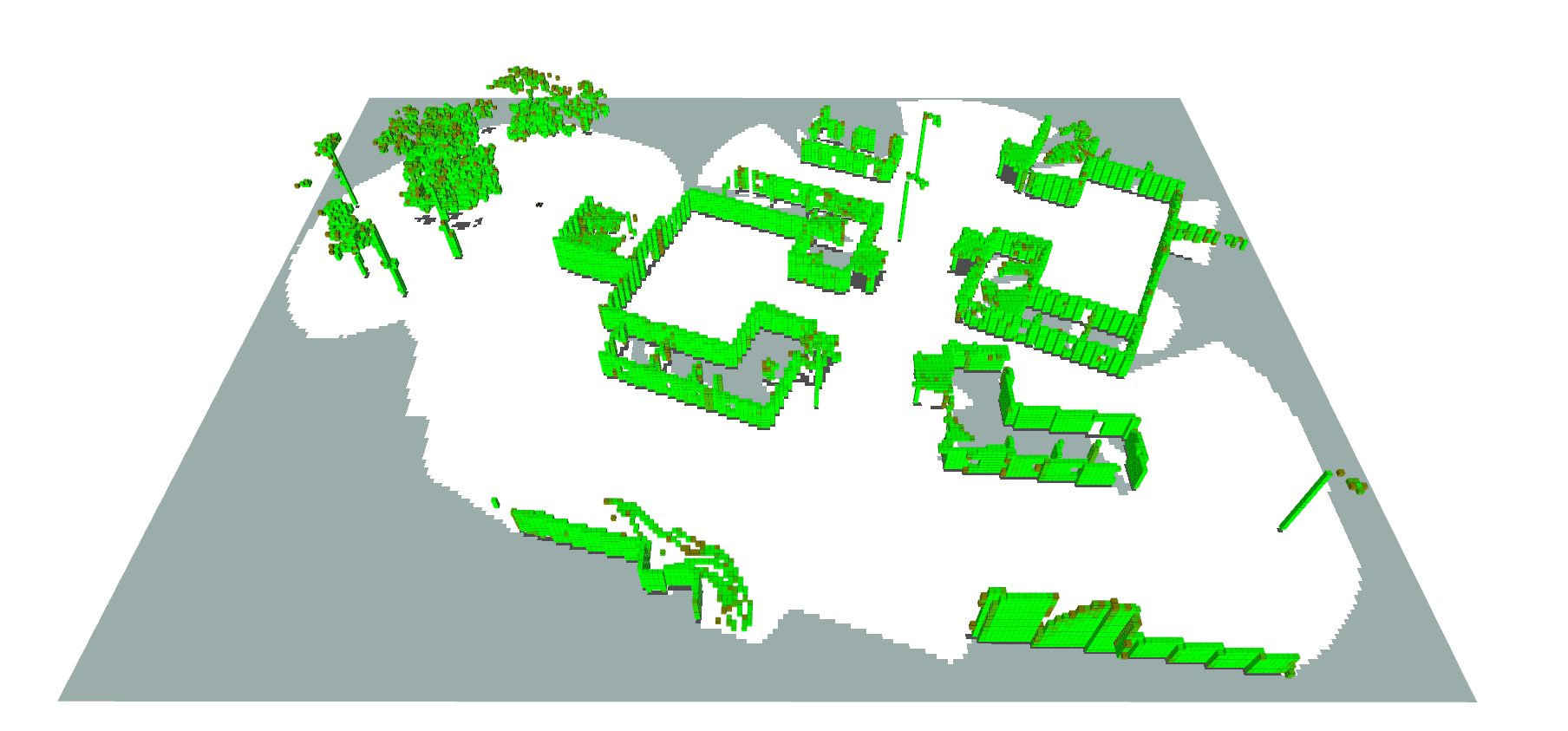}
    \captionsetup{justification=centering}
    \caption{OctoMap after $20$ exploration iterations}
    \label{fig:3D-exp-sim-timelapse-d}
    \end{subfigure}%
    \hfill%
    \begin{subfigure}[t]{0.45\linewidth}
    \centering
    \includegraphics[width=\linewidth]{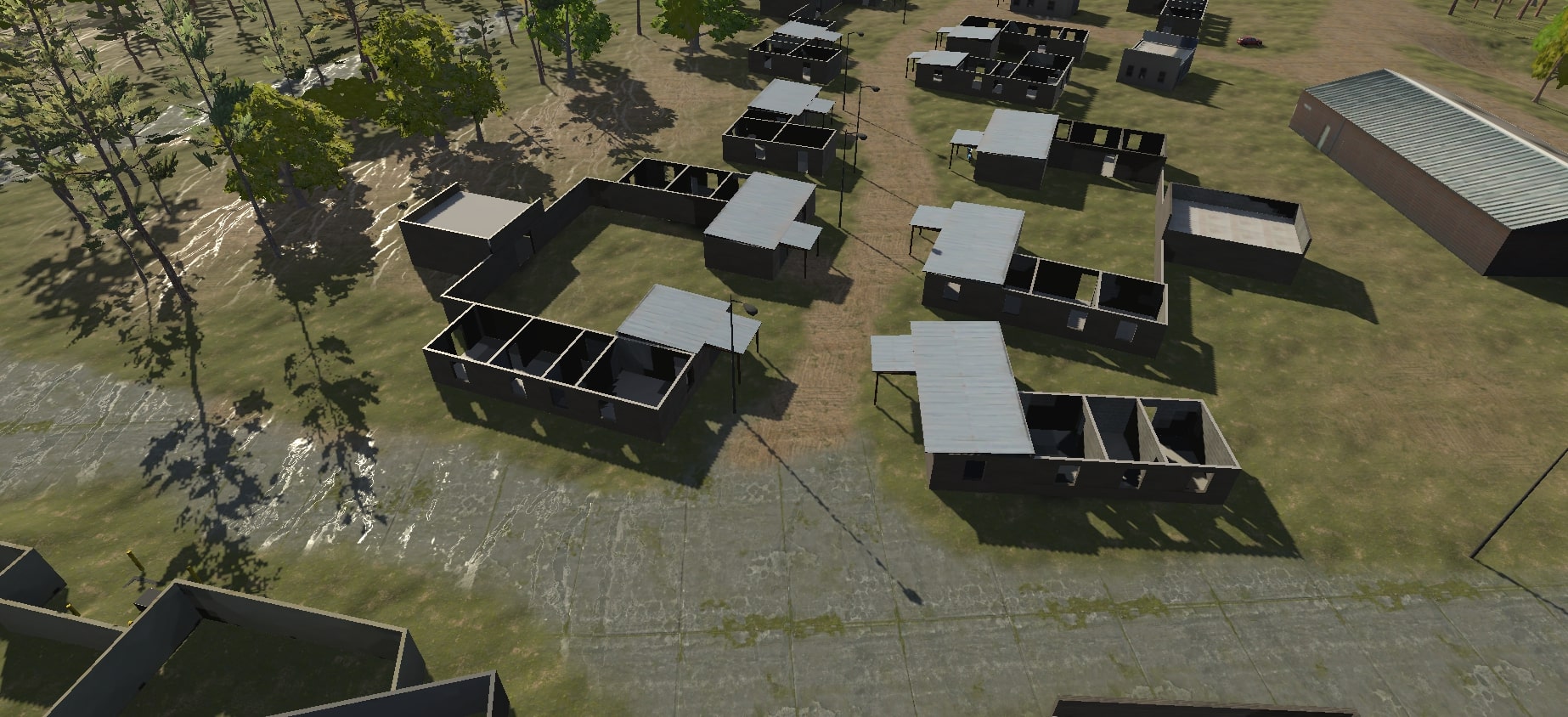}
    \captionsetup{justification=centering}
    \caption{Photo-realistic Unity simulation environment}
    \label{fig:3D-exp-sim-timelapse-e}
    \end{subfigure}
    \caption{Time lapse of 3-D active mapping using the proposed method in simulated Unity environment. The robot receives depth measurements using an RGBD camera, and incrementally builds an OctoMap as it explores the unknown environment.}
    \label{fig:3D-exp-sim-timelapse}
\end{figure*}

In this part, we test our proposed method in a photo-realistic 3-D Unity simulated environment, shown in Fig.~\ref{fig:3D-exp-sim-timelapse-e}. We use a Husky robot equipped with a depth camera that provides a 3-D point cloud used for building an OctoMap. OctoMap~\cite{octomap} is occupancy grid mapping method based on OcTree data structure and the log-odds technique that provides a scalable way to store the 3-D map for large environments through compressing the map cells with similar occupancy probability. For experiments in 3-D environments, we operate our proposed method in two different modes. In the first mode (Exp-2D), we project both the OctoMap and 3-D point clouds onto the $z = 0$ surface, obtaining a 2-D representation of the map and the observations. Then we compute informative trajectories similar to the 2-D active mapping in Sec.~\ref{subsec:2D-exp-sim}. By executing the optimal trajectory, we receive new 3-D point cloud observations that incrementally update the OctoMap. The first mode of active mapping only requires ray-tracing in a 2-D grid map, saving computation time while trading for accuracy in evaluating mutual information. In the second mode (Exp-3D), we directly use the 3-D sensor model for ray-tracing within the OctoMap. This mode of operation is expected to evaluate mutual information more accurately since the exact sensor model and 3-D occlusion has been taken into account. Fig.~\ref{fig:3D-exp-sim-timelapse} shows several exploration iterations of active mapping process using Exp-3D. Fig.~\ref{fig:3D-exp-comparison}.~(top) shows simulation results for experiments in the 3-D Unity environment for $20$ exploration iterations. We observe Exp-2 finds a good balance between exploration efficiency and computation time; while Exp-3D is slightly more efficient in terms of distance traveled, it takes longer computation time to perform exploration compared to Exp-2D. This can be attributed to the fact that the objects in the simulation environment have a uniform profile in the $z$ direction, hence the amount of information in the 2-D and 3-D maps are almost the same. TABLE.~\ref{tbl:3D-exp-clearance}~(middle column) compares the average clearance from the obstacles among different methods. Compared to Frontier, it is clear that the proposed method chooses informative trajectories that avoid approaching the obstacles.

\begin{figure}[t]
    \includegraphics[width=\linewidth]{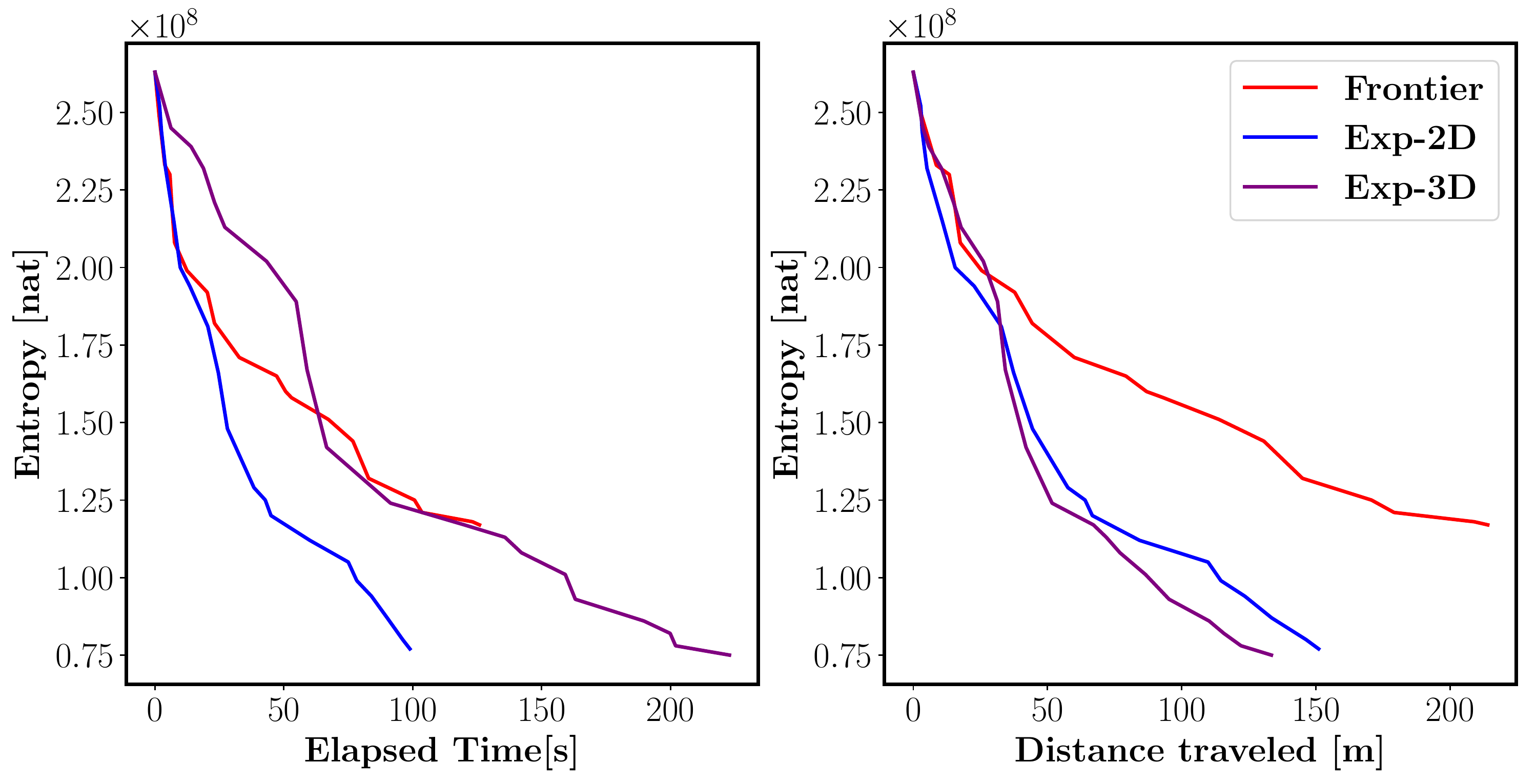}\\
    \includegraphics[width=\linewidth]{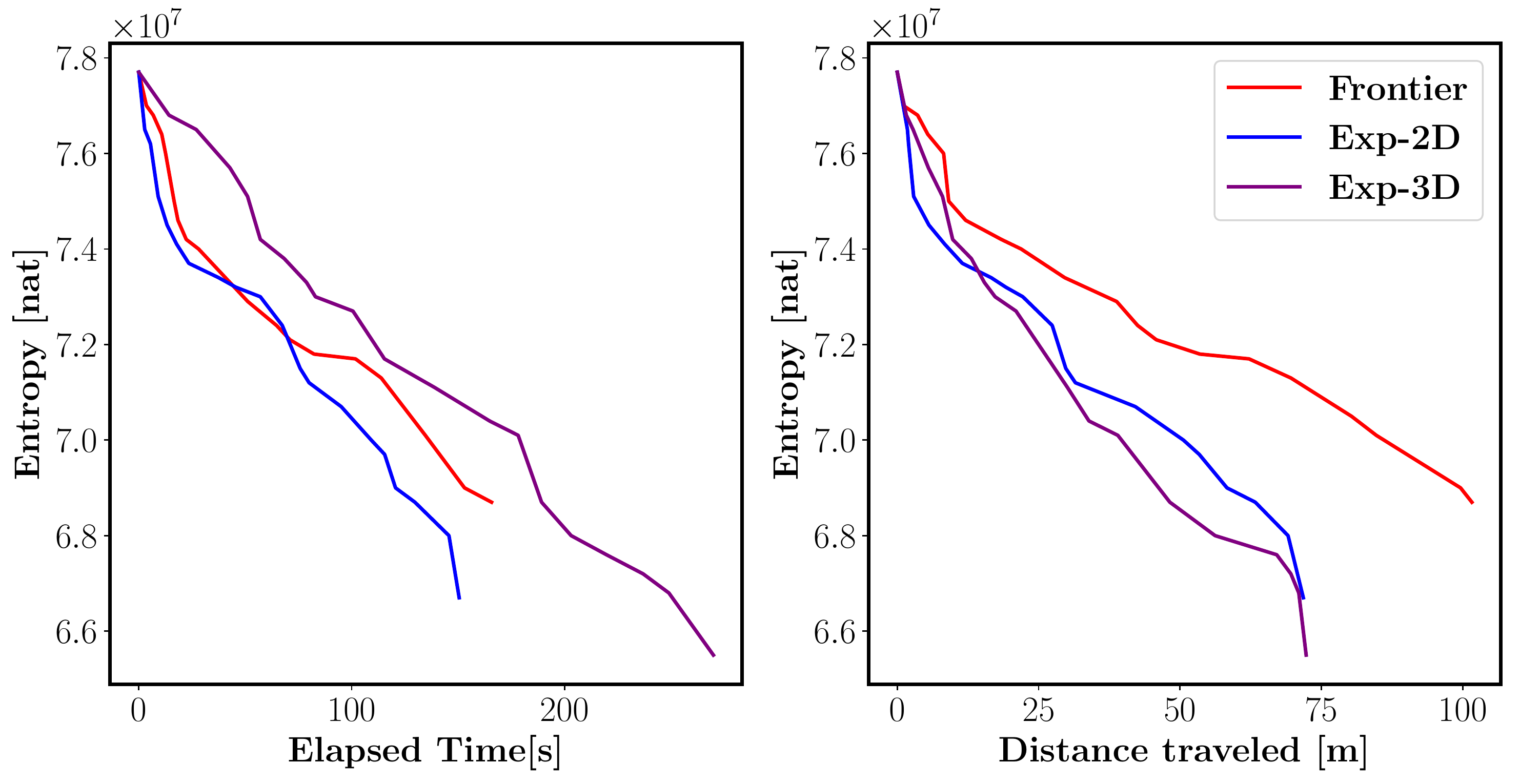}
    \caption{3-D active mapping performance compared between Frontier and our method for $20$ exploration iterations. Exp-2D uses a projected 2-D map and observation model for planning, while Exp-3D performs ray-tracing and information computation directly on the OctoMap. Top row: Unity simulation environment. Bottom  row: Real-world experiments.}
    \label{fig:3D-exp-comparison}
\end{figure}

\begin{table}[t]
\caption{Clearance from obstacles compared between Frontier and our proposed method averaged over $20$ exploration iterations.}
\label{tbl:3D-exp-clearance}
\centering
\begin{tabular}{|p{0.5in}|p{1.0in}|p{1.0in}|}
\hline
Algorithm & 3-D Unity Simulation & Real World\\
\hline
Frontier & 1.2 & 0.48\\
Exp-2D & \textit{\textbf{3.9}} & 1.7\\
Exp-3D & 3.4 & \textit{\textbf{1.84}}\\
\hline
\end{tabular}
\end{table}

\subsection{Real-world Experiments}
\label{subsec:real-world-exp}

We deployed Exp-2D and Exp-3D on a ground wheeled robot to autonomously explore an indoor environment. The robot was equipped with a NVIDIA Xavier NX GPU, a Hokuyo UST-10LX LiDAR, and an Intel RealSense D435i RGBD camera. We implemented our software stack using \textit{Robot Operating System} (ROS)~\cite{ros}. Robot localization was carried out using ICP scan matching of LiDAR measurements~\cite{icp}. 3-D point clouds from the depth images were used to build an OctoMap. The complete implementation was able to update the OctoMap every $0.11 s$ on average. Fig.~\ref{fig:3D-exp-comparison}.~(bottom) and TABLE.~\ref{tbl:3D-exp-clearance}~(right column) show the performance of active mapping for $20$ exploration iterations and the average clearance from obstacles, respectively. The real-world experiments confirm the findings in the simulations in terms of the efficiency of the proposed method.
\section{Conclusion}
\label{sec:conclusion}

This paper developed a differentiable approximation of the Shannon mutual information between a probabilistic occupancy grid map and range sensor measurements. Our formulation enables gradient-based optimization of informative occlusion-aware sensing trajectories in 3-D and allows the inclusion of additional differentiable penalty terms, such as collision cost. We demonstrated in simulated and real-world experiments that our method outperforms the state-of-the-art techniques due to its ability to optimize the sensing views in continuous space. In a future work, we will investigate extending the proposed method to multiple agents, where we expect gradient-based optimization to also be significantly more efficient than discrete space search.


\appendices
\section{Proof of Prop.~\ref{prop:approx_smi_multi}}
\label{app:approx_smi_multi}

Cond.~\eqref{cond:1} and \eqref{cond:2} effectively state that for any pair of viewpoints $\bfV \in \hat{\calG}(\bfX_i)$ and $\bfU \in \hat{\calG}(\bfX_j)$, the two sets of map cells inside the FOVs of $\bfV$ and $\bfU$ do not intersect. This is true since the spaces inside $\bfV \ubar{\calF}$ and $\bfU \ubar{\calF}$ are always subsets of $\bbU(\bfX_i)$ and $\bbU(\bfX_j)$, respectively, while Cond.~\eqref{cond:2} states that $\bbU(\bfX_i) \cap \bbU(\bfX_j) = \emptyset$. Consequently, the observations made from $\bfV$ and $\bfU$ are independent random variables, resulting in the following decomposition of the SMI:
\begin{equation}
\begin{aligned}
    I(\bfm; \bfz_v, \bfz_u &\mid \bfV, \bfU, \calH_t) =\\
    &I(\bfm; \bfz_v \mid \bfV, \calH_t) + I(\bfm; \bfz_u \mid \bfU, \calH_t). \notag
\end{aligned}
\end{equation}
Following Cond.~\eqref{cond:2}, the above decomposition can be applied for any set of viewpoints $\calV := \bfV_{t+1:t+T}$ where $\bfV_{t+\tau} \in \hat{\calG}(\bfX_{t+\tau})$, $\tau \in \crl{1, \ldots, T}$. Hence we have:
\begin{equation}
\begin{aligned}
    &\sum_{\tau = 1}^T \sum_{\calV \in \calG^T} A_{\calV}(\bfX_{t+1:t+T}) I(\bfm; \bfz_{t+\tau} \mid \bfV_{t+\tau}, \calH_t) =\\
    &\sum_{\tau = 1}^T \Tilde{I}(\bfm; \bfz_{t+\tau} \mid \bfX_{t+\tau}, \calH_t) \sum_{\calV - \bfV_{t+\tau} \in \calG^{T-1}} \frac{A_{\calV}(\bfX_{t+1:t+T})}{\alpha_{\bfV_{t+\tau}}(\bfX_{t+\tau})}. \notag
\end{aligned}
\end{equation}
Based on the definition of $A_{\calV}(\bfX_{t+1:t+T})$, the inner sum is equal to $1$, which yields the expression in \eqref{eq:SMI_approx_multi}.\qed


\section{Proof of Prop.~\ref{prop:approx_smi_grad}}
\label{app:approx_smi_grad}

Since the approximate SMI in \eqref{eq:SMI_approx_single} is linear with respect to $\alpha_\bfV(\bfX)$ terms, the overall gradient computation can be reduced to a weighted sum of individual gradients of $\alpha_\bfV(\bfX)$ with respect to robot pose $\bfX$. Also, it is only needed to compute gradients for viewpoints where $\upsilon(\delta(\bfxi_{\bfX, \bfV})) = 1$ since the rest of the viewpoints do not affect the derivations:
\begin{equation}
\begin{aligned}
    &[\nabla_{\bfpsi} \alpha_\bfV(\bfX \exp{(\hat{\bfpsi})})]|_{\bfpsi = 0}^\top =\\
    &\frac{\frac{\partial \cos{\delta(\bfxi_{\bfX \exp{(\hat{\bfpsi})}, \bfV})}}{\partial \bfpsi} \eta(\bfX) - (1 + \cos{\delta(\bfxi_{\bfX, \bfV})}) \frac{\partial \eta(\bfX \exp{(\hat{\bfpsi})})}{\partial \bfpsi}}{\eta^2(\bfX)}, \notag
\end{aligned}
\end{equation} 
where $\eta(\bfX)$ is defined in \eqref{eq:approx_smi_grad}. Both partial derivations in the numerator require computing $\frac{\partial \cos{\delta(\bfxi_{\bfX \exp{(\hat{\bfpsi})}, \bfV})}}{\partial \bfpsi}$. Applying the chain rule, we have:
\begin{equation}
\begin{aligned}
    &\frac{\partial \cos{\delta(\bfxi_{\bfX \exp{(\hat{\bfpsi})}, \bfV})}}{\partial \bfpsi} = \frac{\partial \cos{\delta}}{\partial \delta} \bigg|_{\delta = \delta(\bfxi_{\bfX, \bfV})}\\
    & \times \frac{\partial \delta(\bfxi)}{\partial \bfxi} \bigg |_{\bfxi = \bfxi_{\bfX, \bfV}} \frac{\partial \log(\exp(-\hat{\bfpsi}) \bfX^{-1} \bfV)^\vee}{\partial \bfpsi} \bigg|_{\bfpsi = 0}.\notag
\end{aligned}
\label{eq:partials}
\end{equation}
The first two partial derivatives can be obtained via differentiation in $\bbR$ and $\bbR^6$, respectively. The last partial derivative can be obtained via applying small perturbation $\bfpsi$ in the robot frame:
\begin{align}
    \frac{\partial \log(\exp(-\hat{\bfpsi}) \bfX^{-1} \bfV)^\vee}{\partial \bfpsi} \bigg|_{\bfpsi = 0} = \calJ_R(- \bfxi_{\bfX, \bfV}) \calJ_R(\bfxi_{\bfX}). \notag 
\end{align}
Summing over all $\bfV \in \hat{\calG}$ leads to the expression in \eqref{eq:approx_smi_grad}.\qed

\balance
{\small
\bibliographystyle{cls/IEEEtran}
\bibliography{bib/IEEEexample.bib}
}

\end{document}